\begin{document}

\title{Deep Frequency Awareness Functional Maps for Robust Shape Matching}

\author{Feifan~Luo,
        Qinsong~Li,
        Ling~Hu,
        Haibo~Wang, 
        Haojun~Xu,
        Xinru~Liu, \protect\\
        Shengjun~Liu,~\IEEEmembership{Member,~IEEE,} and
        Hongyang~Chen,~\IEEEmembership{Senior Member,~IEEE}

    \IEEEcompsocitemizethanks{
    \IEEEcompsocthanksitem Equal contribution: Feifan~Luo and Qinsong~Li.
    \IEEEcompsocthanksitem Corresponding authors: Shengjun~Liu and Hongyang~Chen.
    \IEEEcompsocthanksitem Feifan Luo is both with the School of Mathematics and Statistics, Central South University, Changsha, P.R.China and the College of Computer Science and Technology, Zhejiang University, Hangzhou, P.R.China. Email: Luoff@zju.edu.cn.
     \IEEEcompsocthanksitem Qinsong~Li is with the Big Data Institute, Central South University, Changsha, P.R.China.
    Email: qinsli.cg@foxmail.com.
    \IEEEcompsocthanksitem Shengjun~Liu, Haibo~Wang, Haojun~Xu and~Xinru~Liu are with the Institute of Engineering Modeling and Scientific Computing, and School of Mathematics and Statistics, Central South University, Changsha, P.R.China. 
    	E-mail: shjliu.cg@csu.edu.cn, wang\_haibo2017@163.com, xuhaojun4@gmail.com, liuxinru@csu.edu.cn.
     \IEEEcompsocthanksitem Hongyang~Chen is with the Research Center for Data Hub and Security, Zhejiang Lab, Hangzhou, P.R.China.
    	Email: dr.h.chen@ieee.org; hongyang@zhejianglab.com.
     \IEEEcompsocthanksitem Ling~Hu is with the School of Mathematics and Statistics, Hunan First Normal University, Changsha, P.R.China.
    	Email: huling.cg@foxmail.com.
     }
\thanks{Manuscript received June 16, 2024; revised MM DD, YY.}}

\markboth{Journal of \LaTeX\ Class Files,~Vol.~14, No.~8, August~2021}%
{Shell \MakeLowercase{\textit{et al.}}: A Sample Article Using IEEEtran.cls for IEEE Journals}

\maketitle
\begin{abstract}
Traditional deep functional map frameworks are widely used for 3D shape matching; however, many methods fail to adaptively capture the relevant frequency information required for functional map estimation in complex scenarios, leading to poor performance, especially under significant deformations. To address these challenges, we propose a novel unsupervised learning-based framework, Deep Frequency Awareness Functional Maps (DFAFM), specifically designed to tackle diverse shape-matching problems. Our approach introduces the Spectral Filter Operator Preservation constraint, which ensures the preservation of critical frequency information. These constraints promote frequency awareness by learning a set of spectral filters and incorporating them as a loss function to jointly supervise the functional maps, pointwise maps, and spectral filters. The spectral filters are constructed using orthonormal Jacobi polynomials with learnable coefficients, enabling adaptive and efficient frequency representation. Furthermore, we propose a refinement strategy that leverages the learned spectral filters and constraints to enhance the accuracy of the final pointwise map. Extensive experiments conducted on multiple benchmark datasets demonstrate that our method outperforms state-of-the-art approaches, particularly in challenging scenarios involving non-isometric deformations and inconsistent topology. Our code is available at \url{https://github.com/LuoFeifan77/DeepFAFM}.
\end{abstract}

\begin{IEEEkeywords}
Shape matching, functional maps, spectral filter operator preservation, frequency awareness
\end{IEEEkeywords}

\begin{figure}[h!t]
	\centering
	\includegraphics[width=1\linewidth]{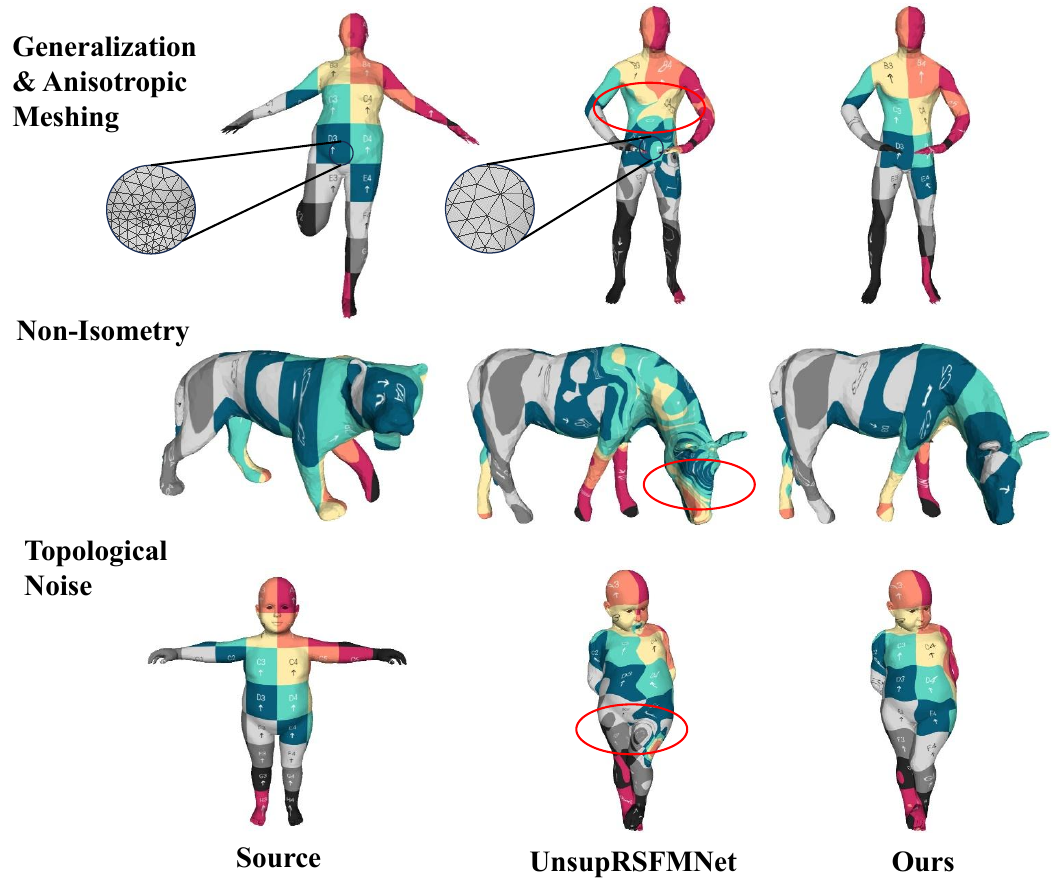}
	\caption{We propose a novel unsupervised spectral shape matching approach that is more robust than UnsupRSFMNet \cite{Cao2023} across a broad range of challenging settings: shape matching with anisotropic meshing and generalization (training on remeshed SCAPE and testing on anisotropic remeshed FAUST), shape matching of non-isometric shape pairs, shape matching with topological noise. }
	\label{fig1: comparisons with UnsupRSFMNet}
\end{figure}

\section{Introduction}\label{sec:Intro}
\IEEEPARstart{N}{on-rigid} shape matching is a fundamental problem in shape analysis and related fields, focusing on establishing meaningful pointwise correspondences or maps between shapes. This task has a wide range of applications, including deformation transfer~\cite{Sumner2004}, shape interpolation~\cite{Eisenberger2021}, and statistical shape analysis~\cite{Bogo2014FAUST}.

In recent years, numerous methods have been developed for shape correspondence. Among these, the functional maps framework~\cite{Ovsjanikov2012} has emerged as one of the most influential due to its flexibility and efficient representation. Functional maps facilitate the transfer of functions between shapes and enable efficient optimization and interconversion with pointwise maps. Subsequent works~\cite{Nogneng2017,Litany2017,Rodola2017,melzi2019matching,eisenberger2020smooth,Hu2021,Donati2022,2022SmoothNonRigidShapeMatchingviaEffectiveDirichletEnergyOptimization,2023ElasticBasis,sahilliouglu2022augmented} have further improved the framework by incorporating various preservation constraints and advanced technologies.

Unlike the functional map-based methods that utilize handcrafted features, the seminal work FMNet (also known as the deep functional maps)~\cite{litany2017deep} was the first to use the learned features from shapes as the optimal descriptors to produce desired functional maps. This approach significantly enhanced the learning process by integrating geometric priors. Building on this foundation, numerous learning-based methods have been proposed for non-rigid 3D shape matching, including supervised\cite{donati2020deep,Sharp2020}, semi-supervised~\cite{sharma2020weakly}, and unsupervised approaches~\cite{roufosse2019unsupervised,halimi2019unsupervised,cao2024revisiting}. 

Recent advancements have also extended from full shape correspondence~\cite{liu2022wtfm,Sun_2023_ICCV} to partial shape correspondence~\cite{attaiki2021dpfm,HU2023101189,Cao2023}. Despite these advances, existing deep functional map methods struggle to adaptively capture the relevant frequency information needed for functional map estimation in complex matching scenarios. These approaches treat all frequency information equally during functional map computation, often overlooking critical frequency components that positively impact matching performance. Consequently, unimportant frequency information may be retained, leading to suboptimal performance in scenarios involving large deformations.

Motivated by the above discussions, we introduce spectral filter operator preservation constraints, supported by rigorous theoretical guarantees. These constraints ensure that frequency information within spectral filter operators is preserved during functional map computation. Crucially, the spectral filter operators can be flexibly generated in a data-driven manner by adjusting a set of spectral filter functions. This adaptability allows the constraint to selectively preserve significant frequency information, leading to more accurate correspondences in specific matching scenarios. Building on this foundation, we propose a novel learning-based framework called Deep Frequency Awareness Functional Maps (DFAFM) for shape matching, which is more robust than UnsupRSFMNet\cite{Cao2023}, as shown in Fig. \ref{fig1: comparisons with UnsupRSFMNet}. In our approach, the proposed constraints are formulated as the unsupervised loss function to jointly optimize pointwise maps, functional maps, and a set of spectral filters. The spectral filters are derived from the orthonormal Jacobi polynomials, enabling the adaptive focus on critical frequency features. This strategy simultaneously optimizes functional and pointwise maps while learning a collection of enhanced spectral filters tailored to specific tasks. Finally, we incorporate the constraint and the learned spectral filters into a fast refinement technique, producing more accurate and robust results during inference.

Our main contributions are summarized as follows:
\begin{itemize}
    \item We propose a general constraint, Spectral Filter Operator Preservation, for determining functional maps, where spectral filters can be optimized in a data-driven manner to adaptively preserve important frequency information.
    \item By integrating this constraint as a loss function, we develop the Deep Frequency Awareness Functional Maps (DFAFM) for shape matching and introduce an efficient refinement strategy that utilizes learned spectral filters to improve correspondence accuracy.
    \item Extensive experiments demonstrate that our method significantly improves correspondence quality across various shape-matching scenarios, particularly in highly challenging non-isometric and inconsistent topology settings.
\end{itemize}

\section{Related work}\label{sec:RW}
We refer readers to the survey \cite{sahilliouglu2020recent} for an in-depth view of shape correspondence. Below we review the methods most related to ours.

\subsection{Axiomatic functional map methods}
Axiomatic shape-matching methods typically rely on geometric assumptions and correspondence criteria defined as optimization objective functions. Approaches such as establishing correspondences between shapes by pointwise descriptors ~\cite{sun2009concise,lhllcgf2021,Cosmo2022,Liu2023} or introducing correspondence distortion metrics using pairwise descriptors~\cite{Vestner2017,Xiang_2020_CVPR} often overlook continuity in map computation and involve optimization objective functions that are inherently nonconvex, posing a quadratic assignment problem that is NP-hard. In contrast, the functional map framework~\cite{Ovsjanikov2012} encodes the pointwise map as a low-dimensional, compact matrix that can be efficiently optimized, making it a popular choice for solving non-rigid shape-matching problems. This framework has been extensively extended in subsequent works~\cite{Huang2014,Nogneng2017,Rodola2017,Litany2017,ren2018continuous,Ren2019,melzi2019matching,Hu2021,ren2021discrete,Huang2020,Gao_2021_CVPR,fan2022coherent,Donati2022,2022SmoothNonRigidShapeMatchingviaEffectiveDirichletEnergyOptimization,2023ElasticBasis}. A common strategy in these works is to incorporate robust functional or map constraints to improve accuracy and robustness, such as Laplace-Beltrami operator (LBO) commutativity~\cite{Ovsjanikov2012}, descriptor preservation~\cite{Nogneng2017}, orientation preservation~\cite{ren2018continuous,Donati2022}, heat kernel preservation~\cite{vestner2017kernelmatching}, wavelet preservation~\cite{Hu2021}, and smoothness in pointwise mapping~\cite{2022SmoothNonRigidShapeMatchingviaEffectiveDirichletEnergyOptimization}.

Distinct from these methods, we propose a novel and powerful constraint that selectively preserves critical frequency information for functional map estimation in a data-driven manner. Furthermore, we demonstrate that heat kernel preservation~\cite{vestner2017kernelmatching}, ZoomOut~\cite{Ren2019}, and wavelet preservation~\cite{Hu2021} are special cases of our approach, using specific spectral filters within our generalized framework.

\subsection{Deep functional map methods}
In contrast to axiomatic functional map methods, which heavily rely on the quality of initial correspondences or descriptors, deep functional map methods aim to overcome this limitation by learning optimal descriptors directly from training data. The pioneering work FMNet\cite{litany2017deep} introduced this concept by using the SHOT descriptor\cite{salti2014shot} as input and optimizing it through residual multilayer perceptron (MLP) layers. UnsupFMNet\cite{halimi2019unsupervised} extended this approach by introducing an unsupervised loss function that minimizes pairwise geodesic distance distortion, achieving performance comparable to FMNet. To address the computational overhead of geodesic distance calculations, \textit{Ayguen et al.}\cite{Ayguen2020} replaced geodesic distances in UnsupFMNet with a more efficient heat kernel, maintaining similar results with reduced complexity. All the above methods constrain the pointwise map as part of their loss functions. An alternative approach is to directly constrain the functional map, as demonstrated by works such as~\cite{roufosse2019unsupervised,sharma2020weakly}, which impose properties like bijection, orthogonality, and commutativity with the LBOs. Subsequent research has primarily focused on improving feature extraction and functional map optimization modules.

GeomFMNet\cite{donati2020deep} extracts shape features directly from vertex coordinates using KPConv\cite{Thomas2019}, optimizing the functional map with a quadratic regularization term by solving multiple linear systems. DiffusionNet\cite{Sharp2020}, a state-of-the-art feature extractor capable of producing discretization-resistant and orientation-aware shape features, has been utilized by numerous follow-up works \cite{attaiki2021dpfm,donati2022deep,Donati2022,li2022learning,cao2022unsupervised,Sun_2023_ICCV,Cao2023,HU2023101189}. \textit{Donati et al.} \cite{donati2022deep} integrated the complex functional map\cite{Donati2022} into the deep functional map framework to address symmetry challenges in shape matching. AttentiveFMaps~\cite{li2022learning} combined functional maps at different spectral resolutions using a spectral attention mechanism. RFMNet\cite{HU2023101189} introduced a novel approach to functional map optimization incorporating wavelet preservation\cite{Hu2021}. However, its wavelets cannot be optimized for specific tasks due to strict requirements for wavelet-like or tight frame functions, which are often unattainable. \textit{Cao et al.} \cite{Cao2023} proposed a learning-based approach for robust shape matching, achieving superior performance under various settings through its refinement strategy. However, it introduces significant additional computational complexity. Notably, we demonstrate that the coupling loss function introduced in \cite{Cao2023} is a special case of our proposed unsupervised loss.

Despite the rapid advancements in deep functional map methods, extracting task-specific frequency information in a learning-driven manner to achieve desirable properties in functional map estimation remains an overlooked challenge. To address this gap, we propose a novel unsupervised learning-based architecture that leverages our flexible and powerful constraints, achieving state-of-the-art performance in a variety of challenging shape-matching scenarios.
\section{Background and notation}
Given a pair of non-rigid shapes  denoted as ${\mathcal{M}}$ and ${\mathcal{N}}$ respectively,  
let $ T: \mathcal M \to \mathcal N $ be a pointwise map between them, then the corresponding functional map $T_F$ is a linear transformation taking functions on ${\mathcal{N}}$ to functions on ${\mathcal{M}}$. Namely, given a function $ g\in\mathcal{L}^{2}(\mathcal{N})$, we define its map $ f\in\mathcal{L}^{2}(\mathcal{M})$ satisfying $ f=T_{F}(g)=g \circ T $. If we use the truncated eigenfunctions (usually the first $k$ ) $ \left\{\phi_{i}^{\mathcal{M}}\right\}_{i \geq 0} $ and $ \left\{\phi_{j}^{\mathcal{N}}\right\}_{j \geq 0} $ of the Laplace-Beltrami operators (LBOs)  defined in each shape as the basis to represent the functions, then the functional map $T_{F}$ can be expressed as a $k \times k$ matrix $\mathbf C$, which could transfer the basis coefficients of the functions between the shapes.

In discrete settings, the shape $\mathcal{M} $ and shape $\mathcal{N} $ are typically represented as two triangle meshes, with $n_{\mathcal{M}}$ and $n_{\mathcal{N}}$ vertices respectively. Then the function $f$ in 
$ \mathcal{L}^{2}(\mathcal{N})$ is discretized to a  vector $ \mathbf{f} \in \mathbb{R}^{n_{\mathcal{N}}}$.  According to the standard cotangent weight scheme \cite{meyer2003discrete}, the Laplace-Beltrami operators (LBOs) defined on them can be represented as Laplacian matrices  $ \mathbf{L}_{\mathcal M} $ and $ \mathbf{L}_{\mathcal N} $, where $ \mathbf{L}_{\mathcal M} = \mathbf{A}_{ \mathcal M}^{-1} \mathbf{B}_{ \mathcal M} \in \mathbb{R}^{n_{\mathcal{M}} \times n_{\mathcal{M}}}$, $ \mathbf{L}_{\mathcal N} = \mathbf{A}_{ \mathcal N}^{-1} \mathbf{B}_{ \mathcal N} \in \mathbb{R}^{n_{\mathcal{N}} \times n_{\mathcal{N}}}$ respectively. Here the matrix $\mathbf{A}$  is the diagonal matrix of lumped area elements and $\mathbf{B}$ is the cotangent weight matrix. We make use of the basis consisting of the first $k$ eigenfunctions of the Laplacian matrix and encode it in a matrix  $ \Phi_{\mathcal M} = \left[\phi_{1}^{\mathcal M}, \phi_{2}^{\mathcal M}, \dots, \phi_{k}^{\mathcal M} \right] \in \mathbb{R}^{n_{\mathcal{M}} \times k} $ having the eigenfunctions as its columns. We also encode the first $k$ eigenvalues of the Laplacian matrix as  a diagonal matrix
$ \Lambda_{\mathcal M} = \mathrm{diag}\{\lambda_1, \lambda_2, \dots, \lambda_{k} \} \in \mathbb{R}^{k \times k}$ with the eigenvalues as its diagonal elements.

Now the pointwise map $ T : \mathcal M \to \mathcal N $ can also be expressed as a matrix $ \Pi_\mathcal{MN} \in \mathbb{R}^{n_\mathcal{M} \times n_\mathcal{N}} $, s.t. $ \Pi_\mathcal{MN}(i, j) = 1 $, if  $ T(i) = j $ and $0$ otherwise, where $ i $ and $ j $ denote the vertex indices of shapes $ \mathcal M $ and $ \mathcal N $, respectively. Now, the map image $\mathbf{f}$ of $\mathbf{g}$ can be represented as $\mathbf{f}=\Pi_\mathcal{MN}\mathbf{g}$. In matrix notation, $\mathbf{C}_{\mathcal{NM}}$ is given by the projection of $ \Pi_\mathcal{MN} \in \mathbb{R}^{n_\mathcal{M} \times n_\mathcal{N}} $ onto the corresponding functional basis, i.e., $\mathbf{C}_{\mathcal{NM}}=\Phi_{\mathcal{M}}^{\dagger}\Pi_\mathcal{MN} \Phi_{\mathcal{N}}$, where $ \Phi^{\dagger}_{\mathcal{M}}=\Phi^{\mathrm{T}}_{\mathcal{M}} \mathbf A_{\mathcal{M}} $ is the Moore-Penrose pseudo-inverse of $ \Phi_{\mathcal{M}}$.

Generally, the computation of the functional map $\mathbf{C}_{\mathcal{NM}}$ mainly resorts to the constraints of descriptor preservation. Namely,
given $d$-dimensional features  $\mathbf D_\mathcal{M} \in \mathbb{R}^{n_{\mathcal{M}} \times d} $ and $\mathbf D_\mathcal{N} \in \mathbb{R}^{n_{\mathcal{N}} \times d} $ computed on each shape, 
they will be approximately preserved by the mapping
$T$. As the functional map $\mathbf{C}_{\mathcal{NM}}$ transfers the basis coefficients of the functions between the 
shapes, this allows the functional map $\mathbf{C}_{\mathcal{NM}}$  satisfying the system of linear equations
 $\mathbf{C}_{\mathcal{NM}} \Phi_{\mathcal{N}}^{\dagger} \mathbf D_{\mathcal{N}} = \Phi_{\mathcal{M}}^{\dagger} \mathbf D_{\mathcal{M}} $. In practice, this equation often couples with some penalization terms, i.e.,
\begin{small}
\begin{equation}\label{equ: desc and reg}
   \mathbf{C}_{\mathcal{NM}} = \mathop{\arg\min}\limits_{\mathbf{C}\!_{\mathcal{NM}}\!} \left\|  \Phi_{\mathcal{M}}\!^{\dagger} \mathbf D_{\mathcal{M}}\! - \mathbf{C}_{\mathcal{NM}}\! \Phi_{\mathcal{N}}\!^{\dagger} \mathbf D_{\mathcal{N}}\! \right\|^{2}_{\mathrm{F}} 
    + \lambda E_{reg}(\mathbf{C}_{\mathcal{NM}}\!), 
\end{equation}
\end{small}
where $\left \| \cdot \right \|_{\mathrm{F}}$ denotes Frobenius norm.

As a last step, the estimated map $ \mathbf{C}_{\mathcal{NM}}$ can be converted to a point-to-point map commonly by nearest neighbor search between the aligned spectral embeddings, namely, $\Pi_{\mathcal{MN}} = NNsearch(\Phi_\mathcal{N}\mathbf{C}_{\mathcal{NM}}^\mathrm{T},\Phi_\mathcal{M}) $.
\section{Functional maps with spectral filter operator preservation}
The preservation constraints are critical in the functional map framework, yet methods that ensure pointwise maps maintain frequency awareness—i.e., the ability to adaptively capture important frequency information for functional map estimation—remain underexplored. To address this gap, we introduce a novel constraint called Spectral Filter Operator Preservation, which comes with rigorous theoretical guarantees and facilitates the computation of functional maps based on this preservation. Notably, the filter functions within this framework can be represented by arbitrary functions, allowing the constraint to flexibly emphasize favorable frequency information by constructing a set of filter functions. This capability establishes a strong foundation for subsequent learning-based frameworks. Furthermore, we demonstrate that existing approaches, such as heat kernel preservation~\cite{vestner2017kernelmatching}, ZoomOut~\cite{Ren2019}, and wavelet preservation~\cite{Hu2021}, are special cases of our proposed method.

\subsection{Spectral filter operator preservation}\label{sec: Method}
To encode the frequency information, we first define a spectral filter operator on the shape $\mathcal{N}$ as $R: \mathcal{L}^{2}(\mathcal{N}) \to  \mathcal{L}^{2}(\mathcal{N})$, which enhances or suppresses specific frequency components of the input functions (or signals). From the perspective of spectral signal processing, the ordered eigenfunctions and eigenvalues of the Laplace-Beltrami operators (LBOs) serve as analogs to the Fourier basis and frequencies. This allows for arbitrary filtering of functions by designing appropriate spectral filters. For simplicity, the discussion is confined to the discrete setting.

Given a function $ \mathbf{f} $ defined on the shape \(\mathcal{N}\), and a filter function $h(\lambda)$, the spectral filter operator $\mathbf{R}$ (which can also be represented as a matrix)  applied to the signal $ \mathbf{f} $  is expressed as:
\begin{equation}\label{eq: Fourier filter}
	\mathbf{R}\mathbf{f} = \Phi h(\Lambda) \Phi^{\dagger} \mathbf{f},   
\end{equation}
where $ h(\Lambda)= \mathrm{diag}\left\{h(\lambda_1), h(\lambda_2), \dots, h(\lambda_{k}) \right\} $. This formulation adjusts the Fourier coefficients of $\mathbf{f}$ (i.e., $\Phi^{\dagger}\mathbf{f}$) using the filter $h(\Lambda)$, represented as $h(\Lambda) \Phi^{\dagger} \mathbf{f}$. The filtered function is then obtained by applying the inverse Fourier transform.

The choice of the filter function $h(\lambda)$ is crucial in the spectral filter operator, as it directly determines the frequency information extracted from the input signal. Various filter functions have been proposed in the field of shape analysis to suit different tasks, such as heat diffusion filters~\cite{sun2009concise}, wave kernel filters~\cite{Aubry2011The}, and others, each tailored to extract specific frequency information. A optimising approach to optimize this process is data-driven design, which motivates our work on developing learnable preservation constraints.

\newtheorem{thm}{Remark}[section]
\begin{thm}\label{pro: remark_1}
Let $T: \mathcal{M} \to \mathcal{N}$ be an \textit{isometric} pointwise map, which induces a functional map $T_F: \mathcal{L}^{2}(\mathcal{N}) \to  \mathcal{L}^{2}(\mathcal{M})$. Denote the spectral filter operators on the shapes $\mathcal{M}$ and $\mathcal{N}$ as $R_\mathcal{M}$ and $R_\mathcal{N}$ respectively, where both operators share the same filter function $h$ applied to the eigenvalues of their corresponding Laplace-Beltrami operators (LBOs). Under these conditions, the spectral filter operators are preserved by the corresponding functional map, i.e.,
\begin{equation}
	T_F \circ R_\mathcal{N} = R_ \mathcal{M} \circ T_F.
\label{eq:operator_preservation}
\end{equation}
In the discrete setting, this preservation is represented as:
\begin{equation}
\mathbf{C}_{\mathcal{NM}}h(\Lambda_\mathcal{N}) = h(\Lambda_\mathcal{M})\mathbf{C}_{\mathcal{NM}}.
\label{eq:matrix of operator preservation}
\end{equation}
\end{thm}
\begin{proof}
The Eq.(\ref{eq:operator_preservation}) holds because the spectral filter operators are intrinsic. The invariance of the spectral filter operators under isometric deformations is a direct consequence of the invariance of the Laplace-Beltrami operator. For the discrete representation in Eq.(\ref{eq:matrix of operator preservation}), we know that $T_F$ can be represented as $T_F=\Pi_{\mathcal{MN}} \approx \Phi_\mathcal{M}\mathbf{C}_\mathcal{NM}\Phi_{\mathcal{N}}^\dagger$ (where the equality holds in the full basis). Substituting this into Eq.(\ref{eq:operator_preservation}), we get
\begin{equation*}
\Phi_\mathcal{M}\mathbf{C}_\mathcal{NM}\Phi_{\mathcal{N}}^\dagger \Phi_\mathcal{N}h(\Lambda_\mathcal{N})\Phi_\mathcal{N}^\dagger=\Phi_\mathcal{M}h(\Lambda_\mathcal{M})\Phi_\mathcal{M}^\dagger\Phi_\mathcal{M}\mathbf{C}_\mathcal{NM}\Phi_{\mathcal{N}}^\dagger.
\end{equation*}
Since $\Phi^{\dagger}\Phi=\mathbf{I}$, this simplifies to:
\begin{equation*}
\Phi_\mathcal{M}\mathbf{C}_\mathcal{NM}h(\Lambda_\mathcal{N})\Phi_\mathcal{N}^\dagger=\Phi_\mathcal{M}h(\Lambda_\mathcal{M})\mathbf{C}_\mathcal{NM}\Phi_{\mathcal{N}}^\dagger.
\end{equation*}
Multiplying both sides of the equation by $\Phi_\mathcal{M}^\dagger$ on the left and $\Phi_\mathcal{N}$ on the right, we obtain:
\begin{equation*}
\mathbf{C}_{\mathcal{NM}}h(\Lambda_\mathcal{N}) = h(\Lambda_\mathcal{M})\mathbf{C}_{\mathcal{NM}}.
\end{equation*}
This completes the proof. 
\end{proof}	

From the above observation, our spectral filter operator preservation can be interpreted as applying filter functions in the spectral domain and exchanging them through functional maps. Two key observations emphasize the significance of this approach:
\begin{itemize}
    \item The proposed preservation can be viewed as a \textit{generalized Laplacian commutativity}, as the standard Laplacian commutativity, $\mathbf{C}_{\mathcal{NM}}\Lambda_\mathcal{N} = \Lambda_\mathcal{M}\mathbf{C}_{\mathcal{NM}}$, is a special case achieved by setting $h(\Lambda)= \Lambda$ in Eq. \eqref{eq:matrix of operator preservation}.
    \item The filter functions $h$ can be defined as arbitrary functions, highlighting the flexibility of our constraint. This allows for the encoding of desired frequency information by learning adjustable filter functions, rather than relying solely on fixed spectral properties, as in standard Laplacian commutativity.
\end{itemize}

\begin{figure*}[hbt]
\centering
    \subfigure[Heat kernel preservation~\cite{vestner2017kernelmatching}]{
            \includegraphics[width=0.23\linewidth]{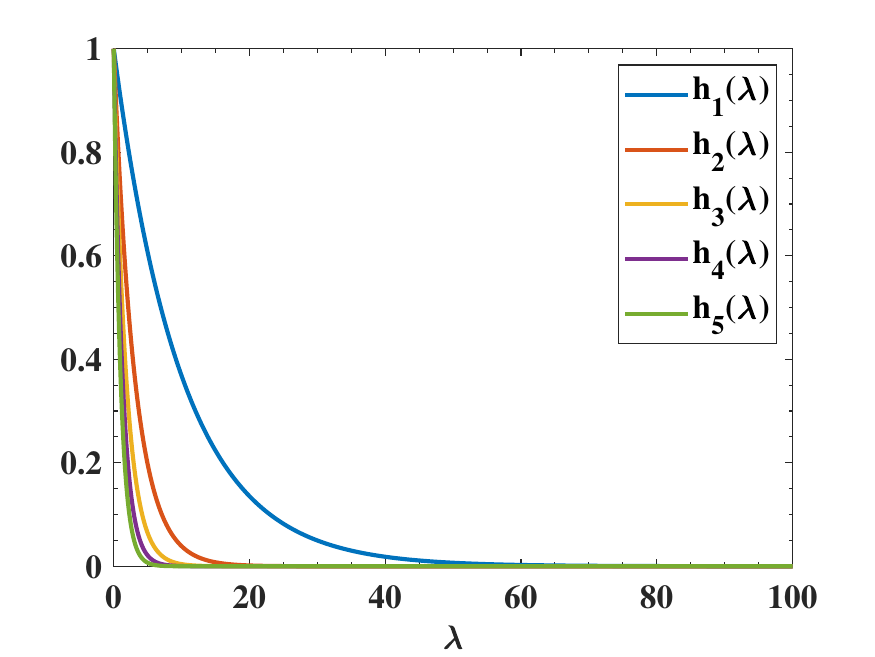}
            }
    \subfigure[ZoomOut~\cite{Ren2019}]{
            \includegraphics[width=0.23\linewidth]{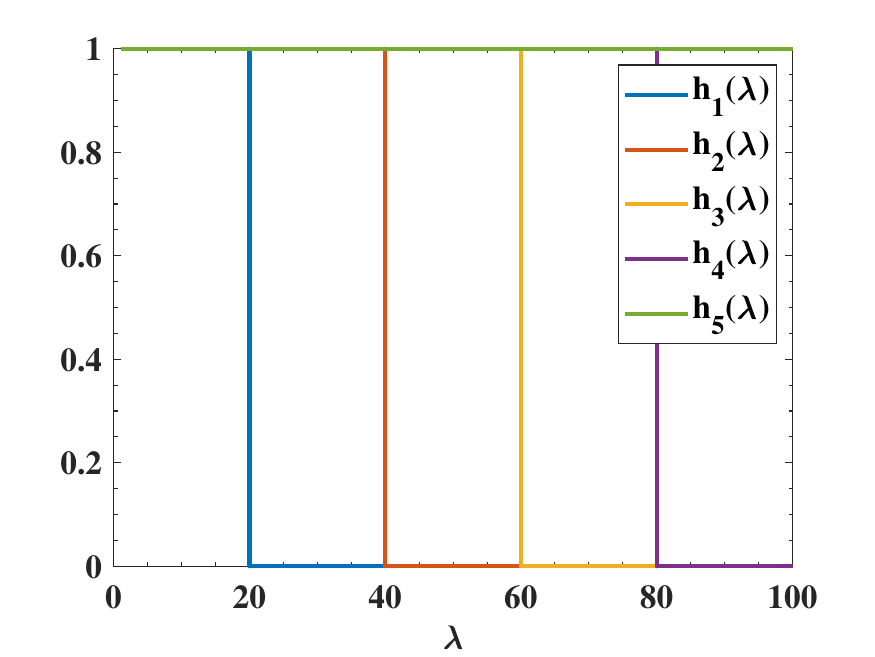}
            }
    \subfigure[Wavelet preservation~\cite{Hu2021}]{
            \includegraphics[width=0.23\linewidth]{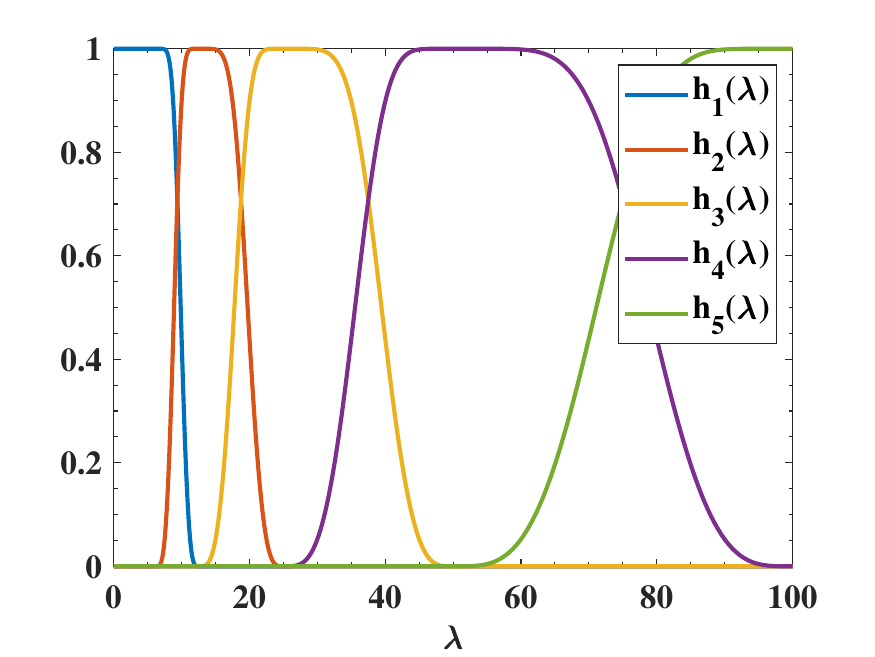}
            }
    \subfigure[Ours]{
            \includegraphics[width=0.23\linewidth]{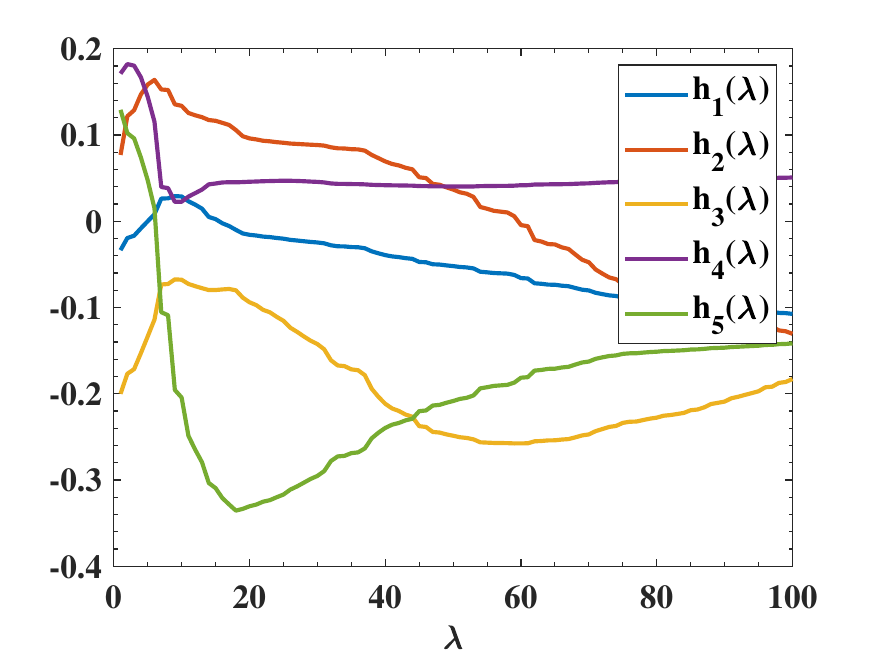}
            }
\caption{The filter functions of different methods. (a) The filter functions in heat kernel preservation are low-pass, focusing on low frequency information only. (b) The filter functions of ZoomOut are upsampled ideal filters, using to upsample resolutions of eigenfunctions (c) Meyer wavelet functions consist of a low-pass and a set of band-pass in wavelet preservation are tight frames strictly (d) Our learned filter functions are optimized on non-isometric datasets SMAL.}
\label{fig: filters of different methods}
\end{figure*}

To further improve the properness \cite{ren2021discrete} of the functional map ,  we replace one term $\mathbf{C}_\mathcal{NM}$ in Eq. \eqref{eq:matrix of operator preservation} with $\mathbf{C}_\mathcal{NM}^\Pi=\Phi_\mathcal{M}^\dagger\Pi_\mathcal{MN}\Phi_\mathcal{N}$. This substitution yields the following equation: 
\begin{equation}
\mathbf{C}_\mathcal{NM}h(\Lambda_\mathcal{N})=h(\Lambda_\mathcal{M})\mathbf{C}_\mathcal{NM}^\Pi.
\label{eq:couple_f_p}
\end{equation}

In the next section, we illustrate how this critical constraint can be applied to compute both the functional map and the pointwise map.

\subsection{Compute maps with spectral filter operator preservation}

A single spectral filter operator preservation, as described in Eq.(\eqref{eq:couple_f_p}) may not fully capture the frequency features of the maps, as each operator processes information from only a specific frequency band. To address this issue, we employ the preservation of multiple spectral filter operators as constraints to compute the functional map $\mathbf{C}_{\mathcal{NM}}$ and the corresponding pointwise map $\Pi_{\mathcal{MN}}$. 

Given a set of filter functions $\{h_s(\lambda)\}_{s=1}^S$, and leveraging Eq.(\ref{eq:matrix of operator preservation}), we formulate the computation of these maps as the following optimization problem:

\begin{equation}\label{equ: compute Pi and C with muti-channel}
\begin{aligned}
\min _{{\Pi}_\mathcal{MN}, \mathbf{C}_\mathcal{NM}} \left\|\mathbf{C}_\mathcal{NM}  H\left( \Lambda_{\mathcal{N}}\right) - H(\Lambda_{\mathcal{M}}) *\mathbf{C}_\mathcal{NM}^\Pi\right\|_{\mathrm{F}}^{2}, \\
	\text { s.t. } \quad \Pi_\mathcal{MN} \mathbf{1}=\mathbf{1}, {\Pi}_\mathcal{MN}^{\mathrm{T}} \mathbf{1} \leq \mathbf{1}, 
\end{aligned}
\end{equation}
where the filter function matrix $H(\Lambda)$$ = $$[ h_1(\Lambda)$ $||h_2(\Lambda)$ $||\cdots|| $ $h_S(\Lambda)]$ $\in$ $ \mathbb{R}^{k\times(S\cdot k)}$, $||$ denotes matrix connection, and $H(\Lambda_{\mathcal{M}})*\mathbf{C}_\mathcal{NM}^\Pi$ $ = \left[ h_1(\Lambda_{\mathcal{M}})\mathbf{C}_\mathcal{NM}^\Pi||\cdots||h_S(\Lambda_{\mathcal{M}})\mathbf{C}_\mathcal{NM}^\Pi\right].$

Given the pointwise map matrix $ \Pi_\mathcal{MN} $, then the Eq.\eqref{equ: compute Pi and C with muti-channel} is transformed to the following problem
\begin{equation}\label{equ: compute C by fixing Pi}	
	\mathbf{C}^{*}_\mathcal{NM} =\arg\min _{\mathbf{C}_\mathcal{NM}} \left\|\mathbf{C}_\mathcal{NM} H\left( \Lambda_{\mathcal{N}}\right) - H\left( \Lambda_{\mathcal{M}}\right)*{\mathbf{C}_\mathcal{NM}^\Pi}\right\|_{\mathrm{F}}^{2}. 
\end{equation}
Next, we efficiently compute $\mathbf{C}^{*}_\mathcal{NM}$ via the following important observation. 

\newtheorem{thm1}{Remark}[section]
\begin{thm}\label{pro: remark_2}
	~If the set of filter functions $\{h_s(\lambda)\}_{s=1}^S$ satisfy the condition that 
	$\sum_{s} h_s^{2}\left(\lambda\right) \neq 0, \forall \lambda,$
then the functional map in Eq.(\ref{equ: compute C by fixing Pi}) can be obtained via

\begin{equation}\label{MCFP compute C}
\begin{aligned}
    \mathbf{C}_\mathcal{NM} =  \left(\sum_{s}  h_s\left(\Lambda_{\mathcal{M}}\right) \mathbf{C}_\mathcal{NM}^\Pi  h_s\left( \Lambda_{\mathcal{N}}\right)\right) {\left(G^{-1}(\Lambda_{\mathcal{N}})\right)}. 
 \end{aligned}
\end{equation}
where $G(\Lambda_{\mathcal{N}})=\sum_{s}h_s^{2}\left(\Lambda_{\mathcal{N}}\right)$.
\begin{proof}
	~See Appendix \ref{app: Pro}. 
\end{proof}			
\end{thm}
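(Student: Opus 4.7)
The plan is to exploit the block-concatenation structure of the objective so it decouples as a sum over $s$, then apply the standard first-order condition for an unconstrained least-squares problem in $\mathbf{C}_\mathcal{NM}$. First, I would observe that because $H(\Lambda_\mathcal{N})$ is the horizontal concatenation $[h_1(\Lambda_\mathcal{N})\|\cdots\|h_S(\Lambda_\mathcal{N})]$ and $H(\Lambda_\mathcal{M})*\mathbf{C}^\Pi_\mathcal{NM}$ is the analogous concatenation of blocks $h_s(\Lambda_\mathcal{M})\mathbf{C}^\Pi_\mathcal{NM}$, the squared Frobenius norm splits blockwise, giving
\begin{equation*}
\bigl\|\mathbf{C}_\mathcal{NM} H(\Lambda_\mathcal{N}) - H(\Lambda_\mathcal{M})*\mathbf{C}^\Pi_\mathcal{NM}\bigr\|_\mathrm{F}^2 = \sum_{s=1}^{S}\bigl\|\mathbf{C}_\mathcal{NM}\, h_s(\Lambda_\mathcal{N}) - h_s(\Lambda_\mathcal{M})\mathbf{C}^\Pi_\mathcal{NM}\bigr\|_\mathrm{F}^2.
\end{equation*}

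Next, since the objective is a convex quadratic in $\mathbf{C}_\mathcal{NM}$ with no constraints, I would differentiate with respect to $\mathbf{C}_\mathcal{NM}$ and set the gradient to zero. Using the fact that each $h_s(\Lambda_\mathcal{N})$ is diagonal and hence symmetric (so its transpose equals itself), the normal equation becomes
\begin{equation*}
\sum_{s=1}^S \mathbf{C}_\mathcal{NM}\, h_s^2(\Lambda_\mathcal{N}) \;=\; \sum_{s=1}^S h_s(\Lambda_\mathcal{M})\,\mathbf{C}^\Pi_\mathcal{NM}\, h_s(\Lambda_\mathcal{N}),
\end{equation*}
which I would rewrite as $\mathbf{C}_\mathcal{NM}\, G(\Lambda_\mathcal{N}) = \sum_s h_s(\Lambda_\mathcal{M}) \mathbf{C}^\Pi_\mathcal{NM} h_s(\Lambda_\mathcal{N})$.

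Finally, I would invoke the hypothesis $\sum_s h_s^2(\lambda)\neq 0$ for all $\lambda$, which implies every diagonal entry of $G(\Lambda_\mathcal{N})=\sum_s h_s^2(\Lambda_\mathcal{N})$ is nonzero and therefore $G(\Lambda_\mathcal{N})$ is invertible. Right-multiplying by $G^{-1}(\Lambda_\mathcal{N})$ then yields the claimed closed form for $\mathbf{C}^*_\mathcal{NM}$, and convexity of the objective certifies that this critical point is the global minimizer.

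There is no real obstacle here: the argument is routine least squares once the concatenation is unpacked. The only subtlety to state carefully is the role of the nondegeneracy assumption on $\{h_s\}$, which is precisely what guarantees $G(\Lambda_\mathcal{N})$ is invertible so that the minimizer is unique and admits the given explicit formula; without it, the minimizer is only determined up to an additive term supported on the kernel of $G(\Lambda_\mathcal{N})$.
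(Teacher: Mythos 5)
Your proof is correct and follows essentially the same route as the paper's: both reduce the problem to the normal equations $\mathbf{C}_\mathcal{NM}\sum_s h_s^2(\Lambda_\mathcal{N}) = \sum_s h_s(\Lambda_\mathcal{M})\,\mathbf{C}^{\Pi}_\mathcal{NM}\, h_s(\Lambda_\mathcal{N})$ and then invert the diagonal matrix $G(\Lambda_\mathcal{N})$, which the nondegeneracy hypothesis guarantees is possible. The paper reaches the normal equations by right-multiplying the (generally inconsistent) block equation by $H(\Lambda_\mathcal{N})^{\mathrm{T}}$ in the ``least-squares sense,'' whereas you obtain them by decoupling the objective over $s$ and setting the gradient to zero --- the same computation, with your version making the convexity and global-minimizer justification slightly more explicit.
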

Interestingly, Eq.~\eqref{MCFP compute C} aligns with the intuition of denoising, as it effectively uses multiple filters to refine the coarse functional map $ \mathbf{C}_\mathcal{NM}^\Pi$. Furthermore, the pointwise map can be obtained through a nearest-neighbor search. This process can also be extended to iteratively update both the functional map and the pointwise map, similar to the approach proposed in \cite{Ovsjanikov2012}.

\begin{table}[h!t]
\centering
\caption{Special cases of our preservation }
\scalebox{1.1}{
\begin{tabular}{lcc}
\hline
               & \multicolumn{2}{c}{Filters settings}  \\ \hline
Laplacian commutativity\cite{Ovsjanikov2012}  &\multicolumn{2}{c}{$H(\Lambda) = \Lambda$}     \\ 
Heat kernel preservation~\cite{vestner2017kernelmatching}           & \multicolumn{2}{c}{$H(\Lambda) = \exp(-t\Lambda)$}   \\ 
ZoomOut~\cite{Ren2019}           & \multicolumn{2}{c}{Upsampled
ideal filters}  \\
Wavelet preservation~\cite{Hu2021} & \multicolumn{2}{c}{Meyer functions}    \\ 

Ours        & \multicolumn{2}{c}{Arbitrary filters}   \\ \hline
\end{tabular}}
\label{tab: Rtt}
\end{table}

\subsection{Relation to other techniques} 
In addition to Laplacian commutativity, we emphasize that related works such as heat kernel preservation~\cite{vestner2017kernelmatching}, ZoomOut~\cite{Ren2019}, and wavelet preservation~\cite{Hu2021} can also be viewed as special cases of our framework. For a detailed comparison, see Table~\ref{tab: Rtt}.

\textbf{Heat kernel preservation.} 
Heat kernel preservation~\cite{vestner2017kernelmatching} formulates an optimization problem by leveraging the positive-definite heat kernel to construct pairwise descriptors. In this framework, functional maps are treated as a low-pass approximation of the permutation matrix in the truncated Laplacian eigenfunctions. The alternating heat diffusion between shapes can be interpreted as applying a set of low-pass filters to the functional map matrix, where the filter function is defined as $h(\Lambda) = \exp(-t\Lambda)$ in Eq.\eqref{MCFP compute C} (see Fig.\ref{fig: filters of different methods} (a)). However, the heat kernel is limited in its ability to encode informative features, as it neglects band-pass and high-pass frequency information.

\textbf{ZoomOut.} ZoomOut introduced an iterative spectral upsampling technique to align the functional maps and pointwise map from low to high frequencies, which can be considered as using a set of upsampled ideal filters to represent $\left\{h_s\left( \lambda\right)\right\}_{s=1}^{S}$ in Eq.~\eqref{equ: compute Pi and C with muti-channel}. For instance, setting $h_1\left( \Lambda\right) = \mathrm{diag} \{ \overbrace{1,\cdots,1}^{k_1},0,\cdots,0 \}$, $h_2\left( \Lambda\right) = \mathrm{diag}\{\overbrace{1,1,\cdots,1}^{k_2},0,\cdots,0\}$, $\cdots$, $h_S\left( \Lambda\right) = \mathrm{diag}\left\{1,1,\cdots,1,1,\cdots,1 \right\}$, where $k_1 < k_2< \cdots < k$, see Fig.~\ref{fig: filters of different methods} (b). 
In fact, a set of upsampled ideal filters are used to partition the eigenfunction matrix $\Phi_{\mathcal{M}} \in \mathbb{R}^{M \times k} $ into upsampled spectral resolutions sets $\left\{ \Phi_{\mathcal{M}, 1}\in{\mathbb{R}^{n_{\mathcal{M}} \times k_1}}, \Phi_{\mathcal{M}, 2}\in{\mathbb{R}^{n_{\mathcal{M}} \times k_2}}, \cdots, \Phi_{\mathcal{M}, S} \right\}$, where $\Phi_{\mathcal{M}, 1}$ consists of the first $k_1$ columns of the matrix $\Phi_{\mathcal{M}}$, $\Phi_{\mathcal{M}, S} = \Phi_{\mathcal{M}}$. By using those partitions $\left\{ \Phi_{\mathcal{M}, 1}, \Phi_{\mathcal{M}, 2}, \cdots, \Phi_{\mathcal{M, S}} \right\}$ to recover functional maps iteratively, i.e, ZoomOut. However, ZoomOut cannot enhance or suppress the frequency information softly since the value of the sampled ideal filter is either 0 or 1, leading to loss of information easily. 

\textbf{Wavelet preservation.} Multiple spectral manifold wavelets in wavelet preservation~\cite{Hu2021} are required to be preserved at each scale correspondingly. When we specify our filter functions  $\{h_s(\lambda)\}$ to be the multi-scale tight frame Meyer functions~\cite{leonardi2013tight} (see Fig.~\ref{fig: filters of different methods} (c)), our work will degrade to the wavelet preservation\cite{Hu2021}. However, wavelet functions in the preservation are unlearnable and can only encode fixed frequency features. 

From the above discussion, heat kernel preservation~\cite{vestner2017kernelmatching}, ZoomOut~\cite{Ren2019}, and wavelet preservation~\cite{Hu2021} can be regarded as special cases of our constraints. Each method can be interpreted as applying different filter functions applying to the functional maps,  i.e., lowpass filters for heat kernel preservation, upsampled ideal filters for ZoomOut and lowpass and a set of bandpass filters ~\cite{hammond2011wavelets} for wavelet preservation, respectively. These preservation constraints can be unified into a general framework, as described by Eq.~\eqref{equ: compute Pi and C with muti-channel}.

However, the aforementioned constraints are limited in their ability to construct adaptable filter functions for extracting frequency information, which significantly restricts their applicability. In contrast, our proposed constraint enables flexible selection of filter functions and even allows them to be learnable. This adaptability leads to more effective constraints for computing the maps. As shown in Fig.~\ref{fig: filters of different methods} (d), our filter functions are optimized for specific matching scenarios, encoding more relevant frequency features to enhance correspondence accuracy.

In the next section, we leverage the proposed constraint to develop a novel frequency-aware learning framework for shape matching.

\begin{figure*}[hbt]
	\centering
	\includegraphics[width=1\linewidth]{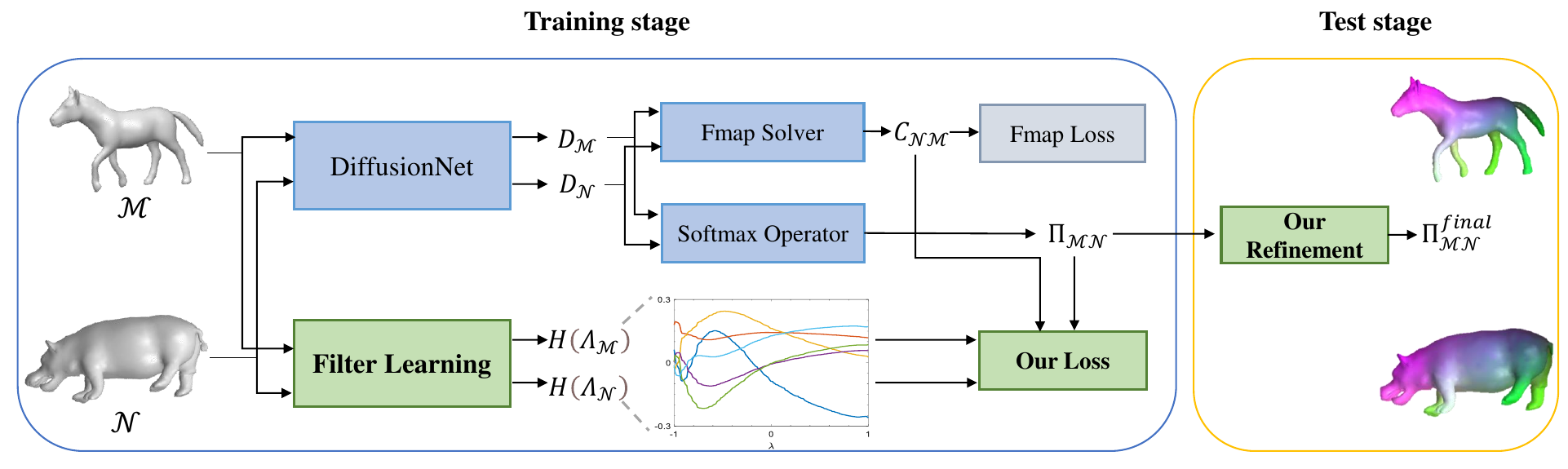}
	\caption{An overview of DFAFM. (1) Inputing a pair of shapes $\mathcal{M}$ and $\mathcal{N}$ to a trainable Siamese feature network to produce learned features $\mathbf{D}_{\mathcal{M}}$ and $\mathbf{D}_{\mathcal{N}}$ respectively. (2) Employing the filter learning layer to produce learned filter functions $H(\Lambda_{\mathcal{M}}) $ and $H(\Lambda_{\mathcal{N}}) $ respectively, i.e., Eq.~\eqref{eq: Jacobi combination}. (3) Utilizing learned features to compute the differentiable pointwise map $\Pi_{\mathcal{MN}}$ and functional map $\mathbf{C}_{\mathcal{NM}}$ by resorting to Softmax operator (Eq.~\eqref{eq: compute soft map}) and functional map solver (Eq.~\eqref{equ: desc and reg}) respectively. (4) A frequency awareness loss term (Eq.~\eqref{equ: frequency couple loss}) is used to supervise the functional map $\mathbf{C}_{\mathcal{NM}}$, pointwise map $\Pi_{\mathcal{MN}}$, as well as $H(\Lambda_{\mathcal{M}}) $ and $H(\Lambda_{\mathcal{N}}) $. (5) Using our refinement technique (Eq.~\eqref{MCFP compute C}) to refine a pointwise map $\Pi_{\mathcal{MN}}$, resulting in a more accurate and robust final correspondence $\Pi^{final}_{\mathcal{MN}}$ at the test stage.}
	\label{fig2: Our networks}
\end{figure*}

\section{Deep Frequency Awareness Functional Maps}
Building on the aforementioned spectral filter operator preservation constraints, we introduce a novel and efficient unsupervised deep functional map architecture, termed Deep Frequency Awareness Functional Maps (DFAFM). An overview of the pipeline is presented in Fig.~\ref{fig2: Our networks}.

\subsection{Learning input pointwise maps}\label{Learning input pointwise maps}

Unlike approaches such as \cite{Cuturi2013,eisenberger2020smooth}, which rely on the time-consuming optimal transport algorithm to compute the differentiable soft pointwise map $\Pi_{\mathcal{MN}}$ after extracting shape features, a more efficient strategy~\cite{Cao2023,Eisenberger2021} utilizes the softmax operator to generate a soft correspondence matrix. Specifically, the soft correspondence matrix is defined as:
\begin{equation}\label{eq: compute soft map}
    \Pi_{\mathcal{MN}} = \mathrm{Softmax}(\mathbf{D}_{\mathcal{M}} \mathbf{D}^{\mathrm{T}}_{\mathcal{N}}/\tau),
\end{equation}
where the element at position ($i$, $j$) represents the probability of correspondence between the $i$-th point on $\mathcal{M}$ and the $j$-th point on $\mathcal{N}$, and $\tau$ is the scaling factor to determine the softness of the correspondence matrix.

\subsection{Unsupervised loss}\label{sec: loss}

Most existing deep functional map methods focus solely on predicting optimized functional maps, often resulting in suboptimal performance. The coupling loss introduced by \textit{Cao et al.}~\cite{Cao2023} has demonstrated robust and accurate matchings by jointly considering functional and pointwise maps. Inspired by this insight, we propose a novel unsupervised coupling loss, termed Frequency Awareness Loss, based on our preservation framework. This loss jointly supervises the functional map, the pointwise map, and the training of filter functions. Specifically, the loss is defined as:
\begin{equation}\label{equ: frequency couple loss}
L_{freq}=\left\|\mathbf{C}_\mathcal{NM} H\left( \Lambda_{\mathcal{N}}\right) - H\left( \Lambda_{\mathcal{M}}\right) *\mathbf{C}^{\Pi}_\mathcal{NM}\right\|_{\mathrm{F}}^{2}, \\
\end{equation}
Here, $\mathbf{C}^{\Pi}_{\mathcal{NM}} = \Phi_{\mathcal{M}}^{\dagger}\Pi_\mathcal{MN} \Phi_{\mathcal{N}}$, where $\Pi_\mathcal{MN}$ is the soft pointwise map matrix computed using Eq. \eqref{eq: compute soft map}, and $\mathbf{C}_\mathcal{NM}$ is obtained by solving Eq.~\eqref{equ: desc and reg}.

Our loss ensures that the functional map is associated with a valid pointwise map and optimizes them simultaneously under different frequency channels. Moreover, our loss can capture ideal spectral information comprehensively by learning filter functions, which leads to accurate and robust correspondence in more challenging matching settings.

Interestingly, by setting $ H(\Lambda) = \mathbf{I}, S=1$, our loss reduces to the coupling loss introduced by \textit{Cao et al.}~\cite{Cao2023}, which is defined as: 
\begin{equation}\label{equ: couple loss}
    L_{co} = \left\|\mathbf{C}_\mathcal{NM}-\mathbf{C}^{\Pi}_\mathcal{NM}\right\|^{2}_\mathrm{F}. 
\end{equation}
This demonstrates that the coupling loss is a special case of our proposed loss. However, the coupling loss lacks frequency awareness, as it cannot adaptively encode informative and important frequency components for specific tasks.

In addition, the common structural functional map regularisation~\cite{Ren2019} is used to penalize the bijectivity and orthogonality of the functional maps, i.e., 
\begin{equation}\label{equ: fmap loss}
    L_{fmap} = \theta_{bi} L_{bi} + \theta_{or} L_{or}, 
\end{equation}
where the bijectivity and orthogonality loss terms can be expressed as $L_{bi} =\left\| \mathbf{C}_\mathcal{NM} \mathbf{C}_\mathcal{MN} -\mathrm{I}\right\|^{2}_\mathrm{F}$ and $L_{or} =\left\| \mathbf{C}_\mathcal{NM} \mathbf{C}_\mathcal{NM}^{\mathrm{T}} -\mathrm{I}\right\|^{2}_\mathrm{F}$ respectively. 

To handle non-isometric matching, we also use the smoothness penalty on the pointwise map, based on the Dirichlet energy of shape vertices, i.e.,  
\begin{equation}\label{equ: smooth loss}
    L_{sm} = \left\|\Pi_\mathcal{MN} V_{\mathcal{N}}\right\|^{2}_{W_{\mathcal{N}}}, 
\end{equation}
where $W_{\mathcal{N}}$ is the cotangent weight matrix of shape ${\mathcal{N}}$.
At last, our final loss is  presented as the weighted combination of the above-stated loss functions, 
\begin{equation}\label{equ: total loss}
    L_{total} = \theta_{freq} L_{freq} + \theta_{fmap} L_{fmap} + \theta_{sm} L_{sm}. 
\end{equation}
Last but not least, we can change the direction of the pointwise map and functional map to build a bidirectional loss function, which can penalize them bi-directionally.

\subsection{Frequency awareness preservation with filter learning}\label{sec: MSFOP}
An important property is that our constraints can be optimized for different matching scenarios by learning filter functions, which can flexibly extract important frequency information for functional map estimation to promote desirable results. 

\textbf{Learnable spectral filter functions.}
The core of our idea is to use a linear combination of a set of basis functions $\left\{ g_l \right\}_{l=0}^{L} $ to represent each filter function and make all the representative coefficients learnable. Namely, let
\begin{equation} 
    h_s\left(\lambda\right) = \sum_{l} w_{s l} g_l\left(\lambda\right), s=1,2,...,S.
\end{equation}
Encoding all representative coefficients of $S$ filter functions as a matrix $W = \left( w_{sl} \right) \in \mathbb {R}^{L \times S} $, with each filter function's coefficients as a column.

\textbf{Basis for spectral filter functions.} Various functions could be used to express our learnable filter functions. Here we choose the Jacobi polynomials to express them, as they have many attractive properties such as orthogonality, excellent capacity of representation, recursive computation, and so on. In particular, the Jacobi polynomials has a very general form, with several orthogonal polynomials as their special cases, such as the Chebyshev polynomials and the Legendre polynomials.
 
Now, we express our filter functions based on a set of Jacobi polynomials \{${J}^{a,b}_l({\lambda})\}_{l=0}^L $, i.e., 
\begin{equation}\label{eq: Jacobi combination}
    h_s({\lambda}) = \sum_{l} w_{s l} {J}^{a,b}_l({\lambda}),
\end{equation}
where $a$ and $b$ are also trainable parameters. For ease of reading, more details regarding the description of the Jacobi polynomials and how to integrate them into our network are shown in the Supplementary Material.

\subsection{Pointwise maps computation during inference}\label{sec: refinement}

To improve the accuracy and robustness of the final pointwise maps during the test stage, several post-processing techniques from axiomatic functional map methods~\cite{Ren2019,eisenberger2020smooth,Hu2021} are commonly used by existing deep functional map approaches~\cite{eisenberger2020deep,HU2023101189,donati2022deep}. Unlike previous deep functional map methods that directly output results without updating parameters during inference, UnsupRSFMNet\cite{Cao2023} introduced a refinement strategy called test-time adaptation. This strategy adjusts the parameters of the networks for each test pair individually through 15 backpropagation iterations, essentially training the networks on the test datasets. Consequently, its superior performance heavily relies on this extremely time-consuming refinement technique. In contrast, our approach, as shown in Eq.\eqref{MCFP compute C}, explicitly models the relationship between the functional map, pointwise maps, and a set of filter functions. This serves as a natural refinement technique, using the learned filter functions to refine the functional maps directly, eliminating the need for time-consuming post-processing.

During inference, we first directly obtains the pointwise map based on similarities of learned feature by using nearest neighbour search, i.e., 
\begin{equation}\label{equ: nnsearch in test}
    \Pi_{\mathcal{MN}} = NNsearch(\mathbf{D}_\mathcal{N},\mathbf{D}_\mathcal{M}).
\end{equation}

Note that the optimized filter functions $H(\Lambda_{\mathcal{M}})$ and $H(\Lambda_{\mathcal{N}})$ are supervised by frequency awareness loss term Eq.~\eqref{equ: frequency couple loss}, so the filter functions obviously retain desirable frequency information and ignore the noise frequency information. Then, with the computed pointwise map matrix $ \Pi_\mathcal{MN} $, as well as the learned filter functions $H(\Lambda_{\mathcal{M}})$ and $H(\Lambda_{\mathcal{N}})$, we spontaneously obtain the more accurate and robust functional maps $\mathbf{C}_{\mathcal{NM}}^\mathrm{*}$ via Eq.~\eqref{MCFP compute C}.  Finally, the final pointwise map is acquired by using nearest neighbour search again, i.e., $\Pi^{final}_{\mathcal{MN}} = NNsearch(\Phi_\mathcal{N}\mathbf{C}_{\mathcal{NM}}^\mathrm{*T},\Phi_\mathcal{M})$.

\section{Experiments and results}
\subsection{Implementation}\label{sec: Imp}

We use DiffusionNet as a feature extractor with its default settings, which uses 128-dimensional WKS~\cite{Aubry2011The} as input features and produces 256-dimensional learned features for the DFAFM network. We set the number of Jacobi polynomials orders $L=8$, the number of channels of spectral filters $S=6$, and the truncated eigensystems $k=200$. For pointwise map computation and the regularised functional map solver, we set $\tau=0.07$ in Eq.~\eqref{eq: compute soft map} and $\lambda = 100$ in Eq.~\eqref{equ: desc and reg}, respectively. In terms of our unsupervised loss, we empirically set $\theta_{bi} = 1$ and $\theta_{or} = 1$ in Eq.\eqref{equ: fmap loss}, $\theta_{freq} = 1$ and $\theta_{fmap} = 1$ in Eq.\eqref{equ: total loss} for near-isometric matching. In the context of non-isometric matching, the loss weight $\theta_{sm}$ for the smoothness penalty term is set to $5.0$. For training, we use the Adam optimizer \cite{Kingma15} with a learning rate of $0.001$ for all learning parameters. Note that, we use the mean geodesic error \cite{Kim2011BIM} to evaluate shape correspondence accuracy, which is computed over all pairs and points in the dataset and normalized by the geodesic diameter of the source shape.

\subsection{Baselines}
We extensively compare our method with existing non-rigid shape matching methods, which we categorize as follows:

\begin{itemize}
    \item \textit{Axiomatic approaches}, including BCICP \cite{ren2018continuous}, ZoomOut \cite{melzi2019matching}, Smooth Shells \cite{eisenberger2020smooth}, DiscreteOp \cite{ren2021discrete}, and MWP \cite{Hu2021}.
    \item \textit{Unsupervised approaches}, including Deep Shells \cite{eisenberger2020deep}, DUO-FMNet \cite{donati2022deep},  WTFMNet \cite{liu2022wtfm}, AttentiveFMaps \cite{li2022learning}, RFMNet \cite{HU2023101189} and UnsupRSFMNet \cite{Cao2023},
    where UnsupRSFMNet has disable the test-time-adaptation in the following experiments. Moreover, we compare with UnsupRSFMNet in the same scenario by using the strategy of test-time-adaptation. 
\end{itemize}

\begin{figure*}[h!t]
	\centering
	\includegraphics[width=.99\linewidth]{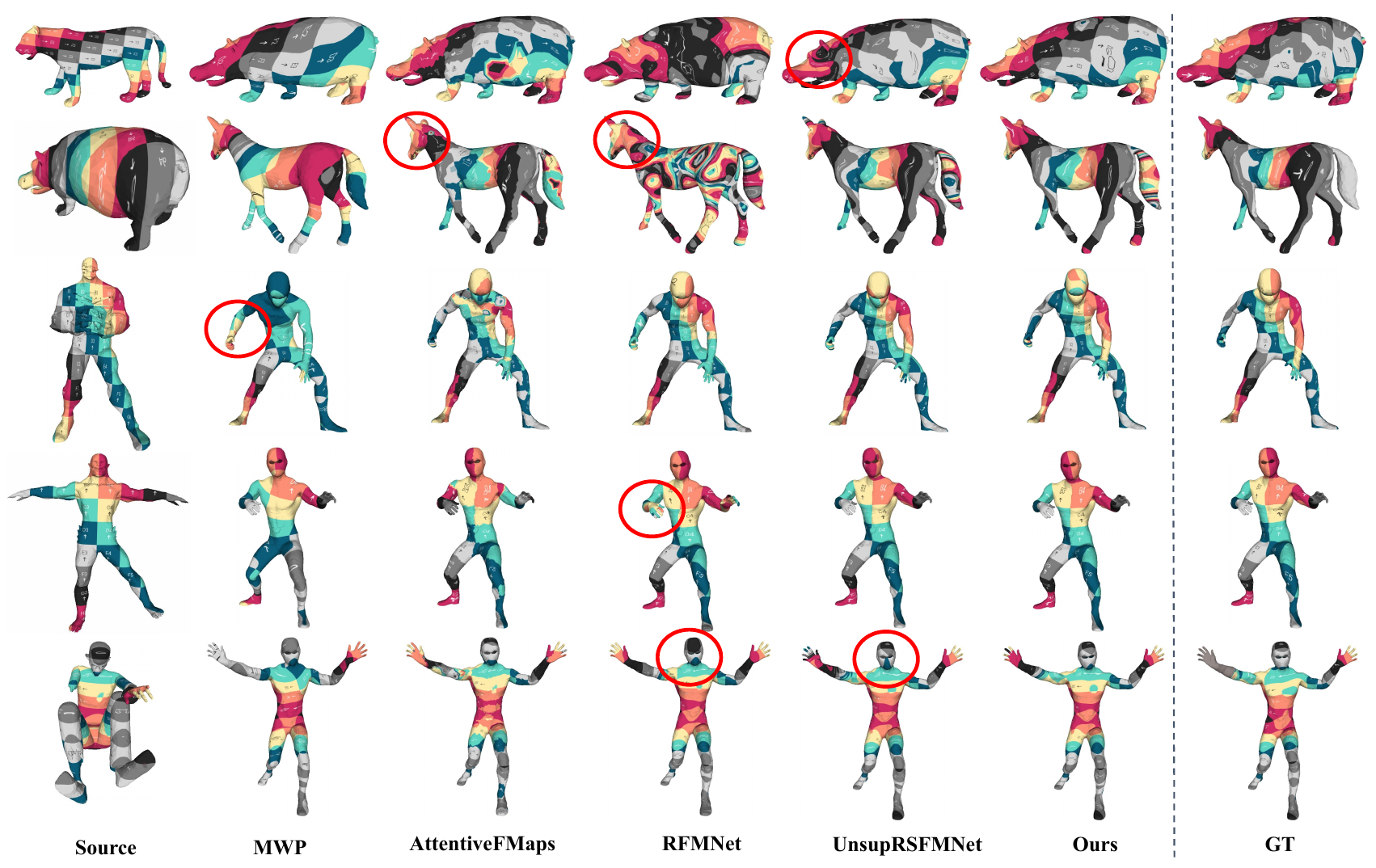}
	\caption{Comparisons with other methods on non-isometric shape matching. Matching results are visualized through texture transfer for shapes from the SMAL \cite{zuffi2017} and DT4D-H \cite{2022SmoothNonRigidShapeMatchingviaEffectiveDirichletEnergyOptimization} datasets. Compared to other methods, our approach results in fewer errors and less texture distortion, demonstrating its superior matching performance for non-isometric shapes. GT means ground-truth results.}
	\label{fig3: non_iso_compar}
\end{figure*}

\subsection{Results}
Extensive experimental results on plenty of datasets including challenging ones such as non-isometric and noisy datasets are used to evaluate our method with several state-of-the-art shape correspondence methods. All results are multiplied by 100 for the sake of readability. 

\begin{table}[h!t]
\centering
\caption{Benchmark tests on remeshed FAUST, SCAPE, and the robustness evaluation on anisotropic remeshed FAUST, SCAPE, respectively. The numbers in the table are mean geodesic errors ($\times 100$). \textbf{Bold}: Best, \underline{Underline}: Runner-up. TTA denotes using Test-Time Adaptation from UnsupRSFMNet for refinement at inference time \cite{Cao2023}.}
\scalebox{1}{
\begin{tabular}{lrrrr}
\hline
Train               & \multicolumn{2}{c}{F} & \multicolumn{2}{c}{S} \\ \cline{2-5} 
Test                & F        & F\_a       & S        & S\_a         \\ \hline
                    & \multicolumn{4}{c}{Axiomatic Methods}          \\
BCICP               & 6.1         & 8.5        & 11.0        & 14.0        \\
ZoomOut             & 6.1         & 8.7        & 7.5         & 15.0        \\
SmoothShells        & 2.5         & 5.4        & 4.7         & 5.0         \\
DiscreteOp          & 5.6         & 6.2        & 13.1        & 14.6        \\
MWP                 & 3.1         & 8.2        & 4.1         & 8.7            \\ \hline
                    & \multicolumn{4}{c}{Unsupervised Methods}                                                                     \\
Deep Shells         & 1.7         & 12.0       & 2.5         & 10.0         \\
DUO-FMNet           & 2.5         & 3.0        & 2.6         & 2.7          \\
WTFM                & 2.6         & 4.3        & 3.1         & 4.8          \\
AttentiveFMaps      & 1.9         & \underline{2.4}        & 2.2         & 2.3           \\
RFMNet              & 1.7         & 3.6        & 2.1         & 3.9           \\
UnsupRSFMNet        & \underline{1.6}         & 2.5        & \underline{1.9}         & \underline{1.9}      \\
Ours                & \textbf{1.6}         & \textbf{2.0}       & \textbf{1.9}         &\textbf{1.9}         \\ \hline
UnsupRSFMNet(+TTA)        & 1.6         & 1.9        & 1.9         & 1.9      \\
Ours(+TTA)        & \textbf{1.5}         & \textbf{1.8}       & \textbf{1.8}         & \textbf{1.8}      \\
\hline

\end{tabular}}
\label{tab: near-iso and aniso}
\end{table}

\textbf{Near-isometric shape matching}. 
We evaluate our method on the remeshed versions \cite{ren2018continuous} of the standard benchmarks FAUST and SCAPE (F and S for short, respectively), which are more challenging than the original datasets. The FAUST consists of 100 human shapes, which shows 10 different people in 10 different poses, is splited into 80/20 for training and testing. The SCAPE contains 71 human shapes, which shows the same person in different poses, is splited into 51/20 for training and testing. 

The results of these benchmarks are provided in Table \ref{tab: near-iso and aniso}, where our method is compared with current state-of-the-art axiomatic and unsupervised learning approaches. The results indicate that our method performs better than the previous state-of-the-art methods. 

\textbf{Matching with anisotropic meshing.}
To evaluate the robustness on different discretizations, we train networks on remeshed datasets and test them on anisotropic remeshed versions (denoted F\_a and S\_a, respectively), which have different mesh connectivity to the original datasets. 

Based on the results presented in Table \ref{tab: near-iso and aniso}, we can observe that our method demonstrates greater resilience to changes in triangulation compared to existing state-of-the-art methods \cite{Cao2023,li2022learning,HU2023101189}. Such as training on F and testing on F\_a, these methods suffer significant performance declines, tend to overfit mesh connectivity, resulting in inaccurate predictions. In contrast, our method exhibits stronger robustness to varying mesh connectivity and consistently surpasses the current state of the art. Besides, we also use the fine-tuning technique called Test-Time
Adaptation from UnsupRSFMNet\cite{Cao2023} for refinement at inference time, and the final result shows that our method is better than it.

\begin{table}[h!t]
\centering
\caption{ Cross-dataset generalization evaluation on anisotropic remeshed FAUST, SCAPE, respectively. The numbers in the table are mean geodesic errors ($\times 100$). \textbf{Bold}: Best, \underline{Underline}: Runner-up.}
\scalebox{1}{
\begin{tabular}{lrrrr}
\hline
Train               & \multicolumn{2}{c}{F} & \multicolumn{2}{c}{S} \\ \cline{2-5} 
Test                & S        & S\_a       & F        & F\_a         \\ \hline
                    & \multicolumn{4}{c}{Axiomatic Methods}          \\
BCICP           & 11.0        & 14.0       & 6.1         & 8.5             \\
ZoomOut         & 7.5         & 15.0       & 6.1         & 8.7             \\
SmoothShells        & 4.7         & 5.0        & 2.5         & 5.4         \\
DiscreteOp          & 13.1        & 14.6        & 5.6         & 6.2        \\
MWP           & 4.1         & 8.7                 & 3.1         & 8.2         \\ \hline
                    & \multicolumn{4}{c}{Unsupervised Methods}                                                                     \\
Deep Shells         & 5.4         & 16.0       & 2.7         & 15.0         \\
DUO-FMNet           & 4.2         & 4.4        & 2.7        & 3.1          \\
WTFM                & 4.1         & 4.6        &   2.9      &   4.8        \\
AttentiveFMaps      & \underline{2.6}         & \underline{2.8}        & 2.2        & \textbf{2.5}          \\
RFMNet              & \textbf{2.3}         & \textbf{2.6}        & \textbf{1.7}        & 3.6          \\
UnsupRSFMNet         & 6.7         & 8.9       &  4.8       &  7.0        \\
Ours                &  2.7    & 2.9      & \underline{1.9}       &  \underline{2.6}       \\ 
\hline
UnsupRSFMNet(+TTA)        & 2.2         & 2.4        & \textbf{1.6}         & 2.1      \\
Ours(+TTA)        & \textbf{2.2}         & \textbf{2.3}       & 1.8         & \textbf{2.1}      \\
\hline

\end{tabular}}
\label{tab: generalization}
\end{table}

\textbf{Cross-dataset generalization.}
The performance of generalization across datasets has been the focus of attention, and here, we set up four sets of experiments to evaluate the performance of our method in generalization across datasets, i.e., training on F and testing on S and S\_a, and vice versa. 
As shown in Table \ref{tab: generalization}, the quantitative results demonstrate our approach achieves comparable performance with state-of-the-art in most settings. Nevertheless, the existing cutting edge approach~\cite{Cao2023} suffers from huge performance drops when testing on the cross-dataset generalisation datasets, which demonstrates substantially its inadequate generalization ability compared to existing learning-based methods. Additionally, the fine-tuning matching results are summarized in the bottom of the Table \ref{tab: generalization}. Note that our approach is superior to it in most settings. Obviously, the results also show that the superior performance of UnsupRSFMNet heavily relies on the test-time adaptation process, our method is more robust than it. 

\begin{table}[h!t]
\centering
\caption{Non-isometric matching on SMAL\cite{zuffi2017} and DT4D-H \cite{2022SmoothNonRigidShapeMatchingviaEffectiveDirichletEnergyOptimization}. The numbers in the table are mean geodesic errors ($\times 100$). \textbf{Bold}: Best, \underline{Underline}: Runner-up.}
\scalebox{1}{
\resizebox{\linewidth}{!}{
\begin{tabular}{lrrr}
\hline
\multirow{2}{*}{}   & \multicolumn{1}{c}{\multirow{2}{*}{SMAL}} & \multicolumn{2}{c}{DT4D-H} \\ \cline{3-4} 
                    & \multicolumn{1}{c}{}                      & intra-class  & inter-class \\ \hline
                    & Axiomatic Methods                         &              &             \\
ZoomOut             & 38.4                                      & 4.0          & 29.0        \\
SmoothShells        & 36.1                                   & 1.1          & 6.3         \\
DiscreteOp         &  38.1                                       & 3.6          & 27.6        \\
MWP                  & 22.3                                   &   1.7           &   25.4          \\ \hline
                    & Unsupervised Methods                      &              &             \\
Deep Shells         & 30.4                                    & 3.4          & 31.1        \\
DUO-FMNet           & 32.8                                   & 2.6          & 15.8        \\
WTFM                & 24.8                                      & 3.9             & 41.0            \\
AttentiveFMaps      & \underline{5.4}                                      & 1.7          & 11.6        \\
RFMNet               & 20.3                                    & 1.5          & 13.9            \\
UnsupRSFMNet         & 5.5                                       & \underline{0.9}          & \underline{5.2}         \\
Ours                & \textbf{4.3}                                       & \textbf{0.9}          & \textbf{4.2}         \\ \hline
UnsupRSFMNet(+TTA)        & 3.9         & 0.9        & 4.1          \\
Ours(+TTA)        & \textbf{3.8}        & \textbf{0.9}       & \textbf{3.9}           \\
\hline

\end{tabular}}}
\label{tab: non-iso}
\end{table}

\textbf{Non-isometric shape matching.}
In the examination of non-isometric shape matching, our approach undergoes rigorous evaluation across two distinct datasets: SMAL \cite{zuffi2017} and DT4D-H \cite{2022SmoothNonRigidShapeMatchingviaEffectiveDirichletEnergyOptimization}. The SMAL dataset is comprised of 49 shapes representing four-legged animals across eight species, partitioned into a training set of 29 instances and a testing set of 20 instances. On the other hand, DT4D-H encompasses nine classes of humanoid shapes, with a training-testing split of 198 and 95 instances, respectively. This bifurcated analysis serves to comprehensively assess the robustness and efficacy of our method in non-isometric shape-matching scenarios.

As shown in Table \ref{tab: non-iso},  our method exhibits superior performance compared to the existing state-of-the-art approaches on the challenging SMAL dataset. In the context of intra-class matching on the DT4D-H dataset, our approach produces excellent matching results. For inter-class shape matching, our approach surpasses existing axiomatic and unsupervised methods by a substantial margin, emerging as the top-performing method in this category when benchmarked against the current state-of-the-art.  We also utilize the fine-tuning technique, and the quantitative results show that our method is still better than UnsupRSFMNet\cite{Cao2023}. These findings underscore the effectiveness and versatility of our proposed method across diverse challenges in non-isometric shape matching. Fig. \ref{fig3: non_iso_compar} illustrates some qualitative results of our method with comparisons to recent state-of-the-art methods on both SMAL \cite{zuffi2017} and DT4D-H \cite{2022SmoothNonRigidShapeMatchingviaEffectiveDirichletEnergyOptimization}  datasets. It observes that our method consistently outperforms existing approaches, and produces smooth and accurate matching results even in the presence of large non-isometric distortions.

\begin{table}[h!t]
\centering
\caption{Topological noise on TOPKIDS\cite{lahner2016shrec}. The table presents the mean geodesic errors ($\times 100$). \textbf{Bold}: Best, \underline{Underline}: Runner-up. Both RFMNet and Ours* employ residual multilayer perceptron (MLP) layers as feature extractors, with SHOT descriptors serving as input features, rather than using DiffusionNet.}
\scalebox{1.1}{
\begin{tabular}{lr}
\hline
                    & \multicolumn{1}{c}{TOPKIDS} \\ \hline
                    \multicolumn{2}{c}{Axiomatic Methods}    \\
ZoomOut             & 33.7                      \\
DiscreteOp          & 35.5                       \\
SmoothShells        & 11.8                      \\
MWP                 & 5.7                      \\ \hline

                    \multicolumn{2}{c}{Unsupervised Methods}     \\
SURFMNet            & 48.6                       \\
ConsistFMaps        & 39.3                     \\
Deep Shells         & 13.7                       \\
WTFM                & 28.2                        \\
AttentiveFMaps      & 23.4                       \\
UnsupRSFMNet        & \underline{9.4}            \\
Ours                & \textbf{6.3}   \\\hline

UnsupRSFMNet(+TTA)        & 9.2              \\
Ours(+TTA)        & \textbf{6.2}            \\
\hline
RFMNet               & \underline{4.9}    \\
Ours* &\textbf{3.0}\\ \hline

\end{tabular}}
\label{tab: topo}
\end{table}

\begin{figure*}[h!t]
	\centering
	\includegraphics[width=1\linewidth]{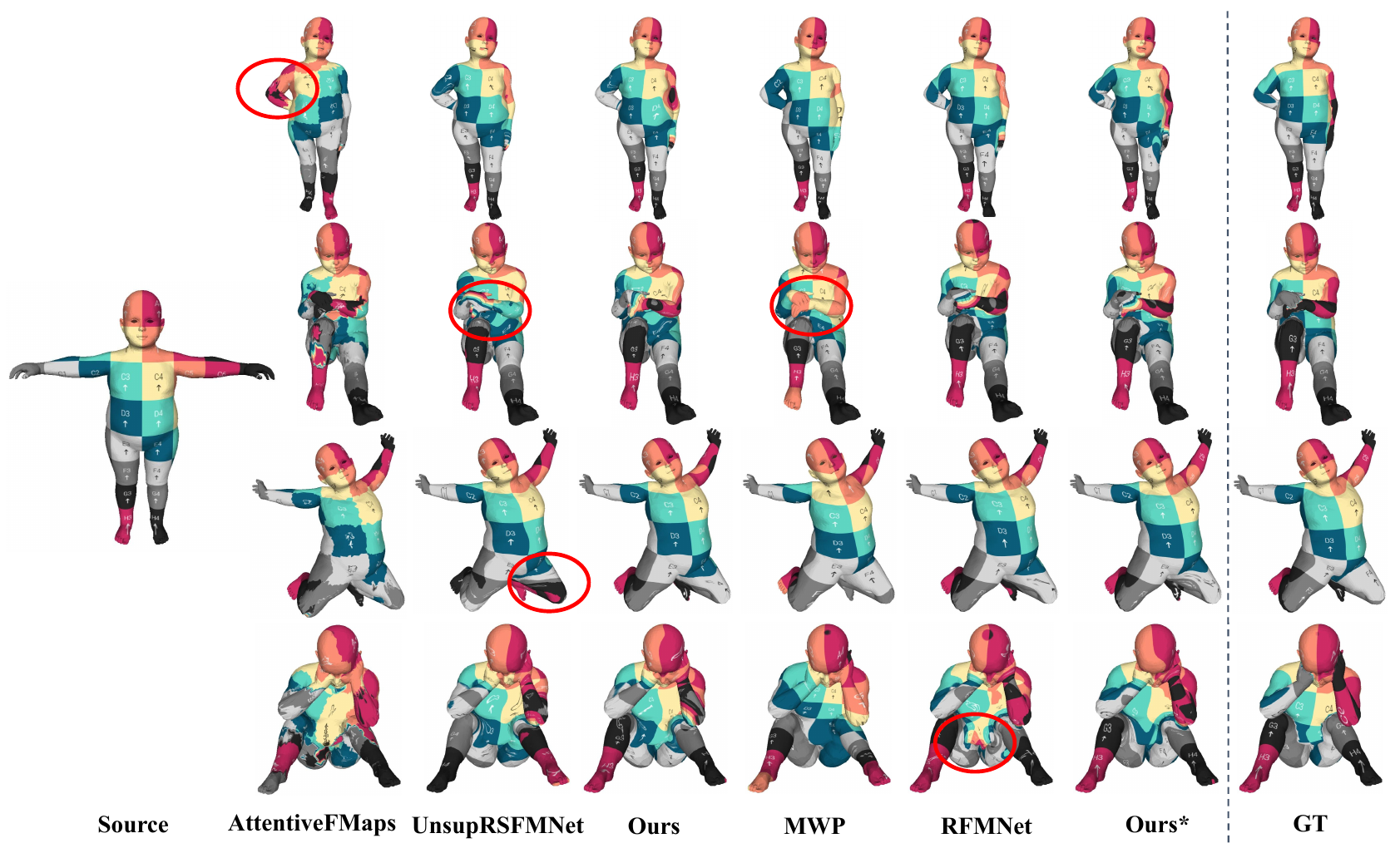}
	\caption{Comparisons with other methods on shape matching with topological noise, where shapes from SHREC’16 TOPKIDS benchmark \cite{lahner2016shrec}. Our results' smoother and more accurate texture distribution illustrates that our approach is more robust to topological noise than existing methods. GT means ground-truth results.}
	\label{fig3: top_compar}
\end{figure*}

\textbf{Matching with topological noise.}
Topological noise poses significant challenges for shape-matching methods because it distorts the intrinsic shape geometry in a non-isometric manner. To evaluate the robustness of our method against topological noise, we conducted experiments using the SHREC'16 TOPKIDS dataset, which consists of 25 shape pairs ($\approx 12\mathrm{K}$ vertices) undergoing near-isometric deformations with severe topological artifacts (e.g., touching hands creating geodesic shortcuts). The ground-truth correspondences were established by matching other shapes to a selected reference T-pose clean shape. 

The matching results are summarized in Table \ref{tab: topo}. Compared to the near-isometric shape matching task, the accuracy of all methods drops significantly when applied to shapes with topological noise. Nevertheless, our method with/without test-time adaptation achieves the best performance and is much more robust against topological noise. This is because, unlike UnsupRSFMNet or other approaches, our method incorporates frequency awareness, enabling it to capture critical information. 

However, it is important to note that the authors of DiffusionNet \cite{Sharp2020} have highlighted its limitations. DiffusionNet is designed to exploit the geometric structure of surfaces, and therefore it is not inherently robust to topological errors or outliers. Consequently, all methods using DiffusionNet as a feature extractor fail to achieve better results in this task. To address this issue, we adopt a meaningful strategy from RFMNet \cite{HU2023101189}, which integrates the input descriptor SHOT \cite{salti2014shot} with seven residual multilayer perceptron (MLP) layers as spatial feature extractors, referred to as Our*. This substitution significantly improves the performance of our method. Additionally, our method still outperforms RFMNet, which also employs SHOT+MLP as its feature extractor. Unlike RFMNet, which relies on fixed wavelet filters, our approach uses data-driven, learnable filters. These filters optimize functional maps by learning task-specific optimal filters, thereby yielding superior matching results. Meanwhile, a visual example is shown in Fig. \ref{fig3: top_compar} where our method provides smoother and more accurate texture transfer results than others.

\begin{table}[h!t]
\centering
\caption{Comparison to standard Laplacian commutativity loss (LC). Our method demonstrates several superior properties, including multiscale capabilities, learnability, and the ability to obtain proper functional maps. In contrast, the standard Laplacian commutativity loss operates at a single scale, maintains only fixed eigenvalues, and is unable to produce proper functional maps.}
\scalebox{1.0}{
\begin{tabular}{lcccccccccccc}
\hline
            & \multicolumn{2}{c}{Filter functions }   & \multicolumn{2}{c}{Multiscale }& \multicolumn{2}{c}{Learnabele} & \multicolumn{2}{c}{Proper} \\ \hline
LC & \multicolumn{2}{c}{$H(\Lambda) = \Lambda$ }   & \multicolumn{2}{c}{\ding{55}} & \multicolumn{2}{c}{\ding{55}} & \multicolumn{2}{c}{\ding{55}}   \\ 
Ours    & \multicolumn{2}{c}{Arbitrary function}   & \multicolumn{2}{c}{\ding{51}}  & \multicolumn{2}{c}{\ding{51}}  & \multicolumn{2}{c}{\ding{51}}   \\ \hline
\end{tabular}}
\label{tab: Laplacian commutativity V.S. Ours}
\end{table}

\begin{table*}[h!t]
\centering
\caption{Quantitative comparison with Laplacian commutativity couple loss on all datasets. The table reports the mean geodesic errors ($\times 100$), with more precise errors provided inside the parentheses.}
\begin{tabular}{lrrrrrrrrrrrrr}
\hline
Train                                               & \multicolumn{4}{c}{F}                                                                                  & \multicolumn{1}{c}{} & \multicolumn{4}{c}{S}                                                                                  & \multicolumn{1}{c}{\multirow{2}{*}{SMAL}} & \multicolumn{2}{c}{DT4T-H}                                        & \multicolumn{1}{c}{\multirow{2}{*}{TOPKIDS}} \\ \cline{2-5} \cline{7-10} \cline{12-13}
Test                                                & \multicolumn{1}{c}{F} & \multicolumn{1}{c}{F\_a} & \multicolumn{1}{c}{S} & \multicolumn{1}{c}{S\_a} & \multicolumn{1}{c}{} & \multicolumn{1}{c}{F} & \multicolumn{1}{c}{F\_a} & \multicolumn{1}{c}{S} & \multicolumn{1}{c}{S\_a} & \multicolumn{1}{c}{}                         & \multicolumn{1}{c}{intra class} & \multicolumn{1}{c}{inter class} & \multicolumn{1}{c}{}                         \\ \hline
Laplacian couple                                       & 1.5                      & 4.0                      & 10.0                      & 14.4                      &                      & 11.2                      & 13.4                      & 1.9 (1.87)                       & 2.0                      & 8.3                                          & 1.0                             & 6.9                            & 38.0                                      
                                        \\   
Ours                                                & 1.6                      & 2.0                      &   2.7                    & 2.9                       &                      & 1.9                       & 2.6                      & 1.9 (1.91)                     & 1.9                      & 4.3                                          & 0.9                             & 4.2                             & 6.3                                                                          
\\ \hline                                  
\end{tabular}
\label{tab: Couple Laplacian loss V.S. Ours}
\end{table*}

\subsection{Comparison to standard Laplacian commutativity loss}

To demonstrate the superiority of filter learning in our frequency awareness preservation, we compare our approach with its most comparable alternative: the standard Laplacian commutativity loss. Table \ref{tab: Laplacian commutativity V.S. Ours} highlights the key differences between the two methods. The standard Laplacian commutativity loss is limited in that it cannot simultaneously supervise soft correspondence and functional maps, as its functional maps are computed solely using a functional map solver. To address this limitation, we introduce a modification to the standard Laplacian commutativity loss by replacing one of the functional maps $\mathbf{C}_\mathcal{NM}$ with $\mathbf{C}^{\Pi}_\mathcal{NM}$, ensuring that the functional map is obtained via soft correspondence.  This modification leads to the development of the Laplacian commutativity couple loss, defined as  $\left\|\mathbf{C}_{\mathcal{NM}}\Lambda_\mathcal{N} - \Lambda_\mathcal{M}\mathbf{C}^{\Pi}_{\mathcal{NM}}\right\|^{2}_\mathrm{F}$. Additionally, because the first eigenvalue is $0$, which does not meet our refinement condition during inference, we set the first eigenvalue to $1$ during both the training and inference phases.

The comparison results are shown in Table \ref{tab: Couple Laplacian loss V.S. Ours}, demonstrating that our method outperforms the Laplacian commutativity couple loss. The Laplacian commutativity couple loss can be interpreted as applying a high-pass filter($h(\lambda) = \lambda$) to the functional map. While this high-pass filter emphasizes high-frequency information and achieves strong performance in near-isometric matching scenarios, it performs suboptimally in non-isometric matching scenarios due to its disregard for low-frequency components. Additionally, the filter function in the Laplacian commutativity couple loss is single-channel and fixed, which limits its ability to capture multiscale frequency information or adapt to varying matching scenarios. In contrast, our method utilizes multi-channel, learnable filters. These filters not only enhance the generalization capabilities of our approach but also significantly improve matching accuracy, especially in cases involving large deformations or non-isometric correspondences.

\begin{figure}[h!t]
	\centering
	\includegraphics[width=0.98\linewidth]{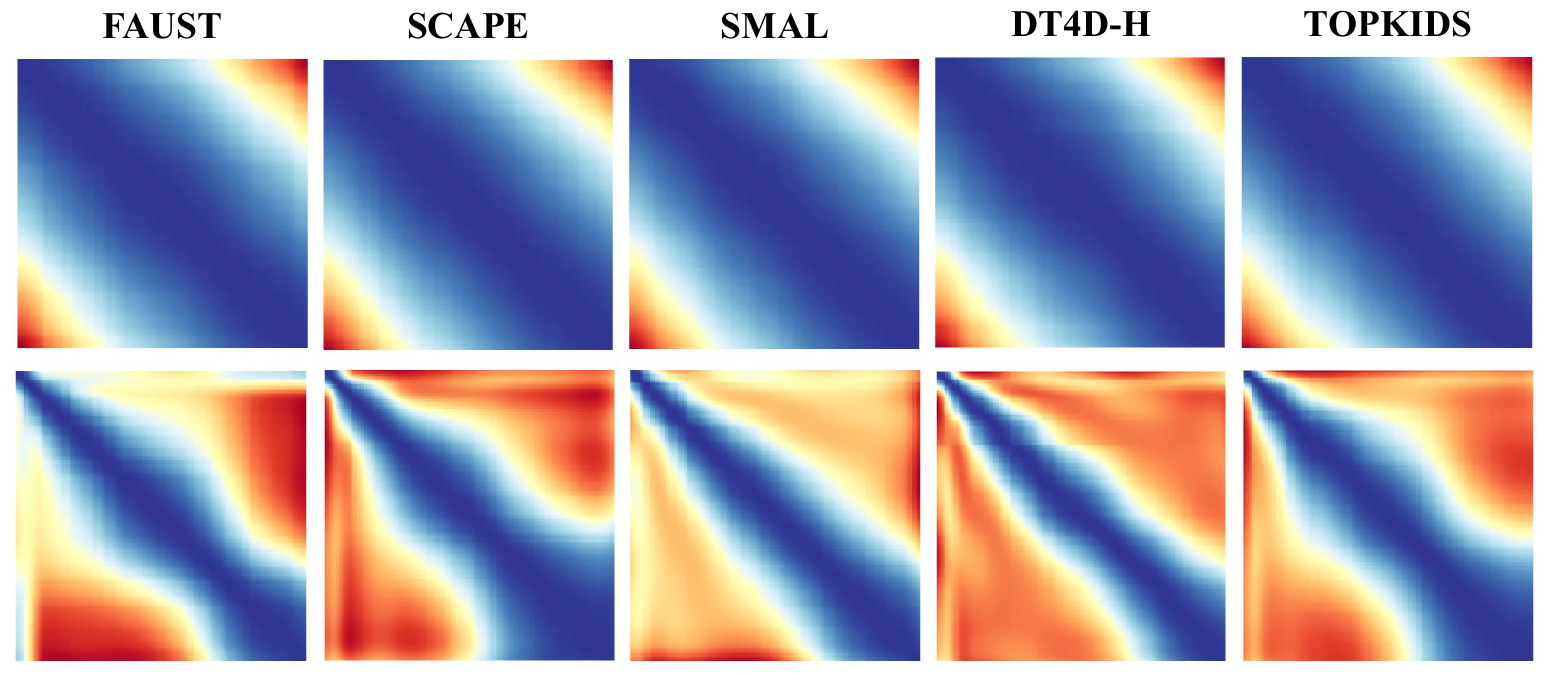}
	\caption{Filter analysis. We visualize the masks of the learned filters across different datasets (second row) and compare them with those derived from the standard Laplacian commutativity (first row). In these visualizations, hotter colors indicate higher penalties. Our observations reveal that our learned masks adaptively adjust the penalties on the functional map matrix, effectively preserving crucial matching information. In contrast, the masks from standard Laplacian commutativity  remain uniform across different datasets, failing to capture dataset-specific characteristics.}
	\label{fig:mask_filter}
\end{figure}

\subsection{Filter analysis}

To further analyze the role of our filters in constraining functional maps, we calculate the masks $ \mathbf{M}_{ij} = \sum_{s}\left( h_s(\lambda_i^{\mathcal{N}}) - h_s(\lambda_j^{\mathcal{M}}) \right)^{2}$ for each dataset separately and visualize them. Additionally, we compare our masks with those derived from the standard Laplacian commutativity \cite{Ovsjanikov2012} to highlight the superiority of our approach. Fig. \ref{fig:mask_filter} presents the qualitative results, where hotter regions represent larger penalties.

As shown in the results, the masks from Laplacian commutativity remain consistent across different datasets, such as FAUST and SMAL, indicating that standard Laplacian commutativity fails to adaptively preserve significant portions of the functional map matrix according to varying degrees of deformation. Since it encodes fixed eigenvalues
only, resulting in a lack of frequency awareness when restricting functional maps. In contrast, our masks adapt to the dataset-specific characteristics. For example, on near-isometric datasets like FAUST and SCAPE, our filters effectively penalize the non-diagonal regions of the functional map matrix, promoting diagonalization. This behavior aligns with the conclusion of functional maps \cite{Ovsjanikov2012}, which states that the functional map matrix is diagonal if two shapes are isometric. Penalizing non-diagonal regions maximally results in superior performance for near-isometric matching.

For datasets with non-isometric deformations, such as SMAL and DT4D-H, the learned filters slightly penalize the non-diagonal regions. This indicates that non-diagonal regions encode critical correspondence information for non-isometric matching, and retaining them as much as possible yields better results. For the TOPKIDS dataset, which contains topological noise, the shapes are near-isometric in the non-noisy regions but exhibit higher deformability compared to FAUST and SCAPE, and lower deformability compared to SMAL and DT4D-H. This is reflected in the visualized masks, where the non-diagonal penalties for TOPKIDS are lower than for FAUST and SCAPE, but higher than for SMAL and DT4D-H.

In conclusion, the experiments show that our learned filters adaptively filter out irrelevant information while retaining the essential correspondence information, depending on the matching scenario. This adaptive filtering enables superior performance across different datasets and deformation scenarios.

\begin{table}[h!t]
\centering
\caption{Ablation study on SMAL\cite{zuffi2017} dataset. }
\scalebox{1.0}{
\begin{tabular}{lcc}
\hline
               & \multicolumn{2}{c}{SMAL} \\ \hline
w.o $L_{freq}$           & \multicolumn{2}{r}{11.7}     \\ 
w.o $L_{sm}$           & \multicolumn{2}{r}{4.6} \\
w.o learning filter layer & \multicolumn{2}{r}{8.4}     \\ 
w.o our inference strategy        & \multicolumn{2}{r}{5.8}     \\ 
Ours        & \multicolumn{2}{r}{\textbf{4.3}}     \\ \hline
\end{tabular}}
\label{tab: ablation study}
\end{table}

\subsection{Ablation study} 
The ablation studies we conduct on the challenging non-isometric dataset (e.g., SMAL \cite{zuffi2017}) to evaluate the importance of our three components, namely, the unsupervised loss in Section \ref{sec: loss}, the learning filter layer in Section \ref{sec: MSFOP}, and the inference strategy in Section \ref{sec: refinement}.

The results are summarized in Table \ref{tab: ablation study}. The first and second rows show the network trained without $L_{freq}$ and $L_{sm}$, respectively. The third row indicates the spectral filter functions $H(\Lambda)$ obtained without updating their weight parameters. The fourth row represents the pointwise map obtained at inference without using our inference strategy, i.e., acquiring the pointwise map directly via Eq. \eqref{equ: nnsearch in test}. By comparing the first row and the last row, we can conclude that $L_{freq}$ indeed improves matching performance by a large margin. By comparing the second row and the last row, we observe that $L_{sm}$ can improve the quantitative results. Comparing the third row and the last row shows that the learning filter layer plays an important role in accurate matching. Comparing the fourth row and the last row confirms that using our inference strategy yields better matching performance.

\begin{figure}[h!t]
	\centering
	\includegraphics[width=0.7\linewidth]{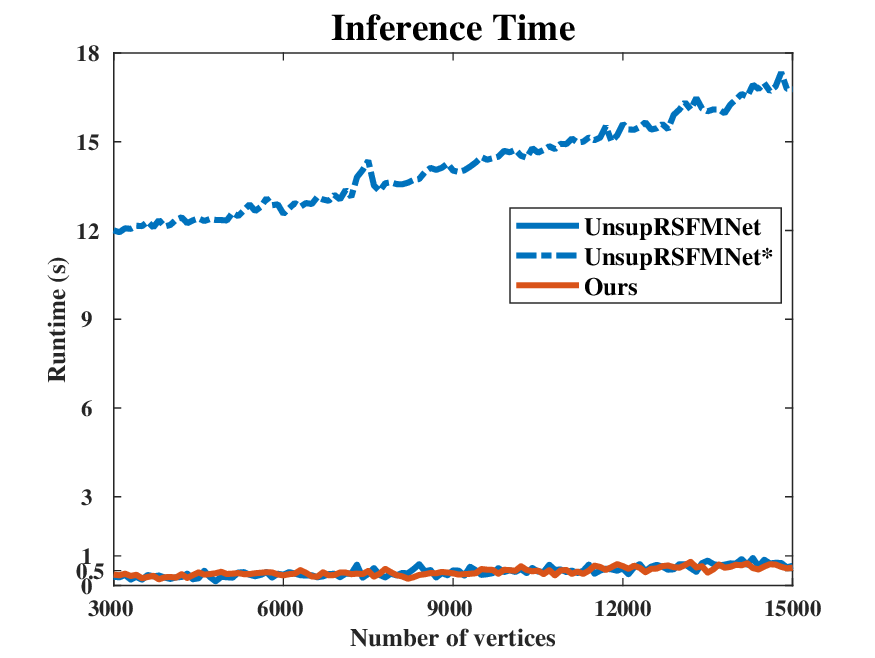}
	\caption{Runtime comparison with different numbers of vertices. UnsupRSFMNet$^*$ refers to the use of test-time adaptation for refinement, which requires significantly more computational time. Our method achieves a runtime comparable to UnsupRSFMNet while delivering better results. Compared to UnsupRSFMNet*, our method still achieves comparable results but is significantly faster to execute.}
	\label{fig: Runtime}
\end{figure}

\subsection{Runtime comparison}

We evaluate the runtime of our method and compare it to the state-of-the-art unsupervised method UnsupRSFMNet\cite{Cao2023}. Fig. \ref{fig: Runtime} illustrates the runtime for shapes with a different number of vertices ranging
from 3K to 15K when using the same setting of 200
eigenfunctions. We observe that the computational speed of our method is comparable to that of UnsupRSFMNet, and tens of times faster than UnsupRSFMNet$^*$. Despite having about the same computational efficiency, our matching results are much better than UnsupRSFMNet. To further enhance the matching performance, UnsupRSFMNet$^*$ uses a fine-tuning technique called test-time adaptation to update the parameters of the network during testing for each test pair individually via 15 backpropagation iterations, which is seriously time-consuming. A typical example is that the inference time of UnsupRSFMNet$^*$ on SMAL\cite{zuffi2017} is approximately equal to $1/3$ of the training time. Conversely, our method is more efficient than it and does not require any parameter updates to achieve superior performance. All the statistics were collected on a server with Intel(R) Xeon(R) Platinum 8358 CPU @ 2.60GHz, and a single NVIDIA A100-958 SXM4-80GB GPU.

\subsection{Parameter analysis} 
Learning spectral filters plays a critical role in our architecture, and selecting the appropriate number of filters is essential for improving matching results. To analyze this, we evaluate two parameters: the Jacobi polynomial order $L$ and the number of channels in the spectral filters $S$. The results of the parameter analysis are summarized in Table \ref{tab: parameter analysis}.

The results indicate that our method is relatively insensitive to both the Jacobi polynomial order and the number of filters, as varying $S$ for a fixed $L$ or varying $L$ for a fixed $S$ has minimal impact on the results. While the best performance is achieved with $L=16$ and $S=8$, higher-order Jacobi polynomials offer better expressiveness but also increase the risk of overfitting, which negatively affects the generalization performance of the network. Therefore, we select the suboptimal parameters  $L=8$ and $S=6$ as the final configuration to strike a balance between performance and generalization.

\begin{table}[h!t]
\centering
\caption{Parameter Analysis on SMAL \cite{zuffi2017}: We evaluate the correspondence accuracy on the SMAL dataset to determine the optimal parameter values for our method. Specifically, $L$ refers to the Jacobi polynomial orders, and $S$ denotes the number of channels in the spectral filters.}
\scalebox{0.96}{
\begin{tabular}{cccccccccccc}
\hline
$L$  & \multicolumn{3}{c}{32} &  & \multicolumn{3}{c}{16} &  & \multicolumn{3}{c}{8} \\ \cline{2-4} \cline{6-8} \cline{10-12} 
$S$ & 16     & 10     & 8    &  & 16     & 10     & 8    &  & 8     & 6     & 4     \\ \hline
 Geo error      & 5.0       & 4.4       & 5.1     &  &  4.9      & 4.8       & \textbf{4.2}     &  & 4.8      & \underline{4.3}     & 4.8       \\ \hline
\end{tabular}}
\label{tab: parameter analysis}
\end{table}

\begin{figure}[h!t]
    \centering
    \includegraphics[width=0.95\linewidth]{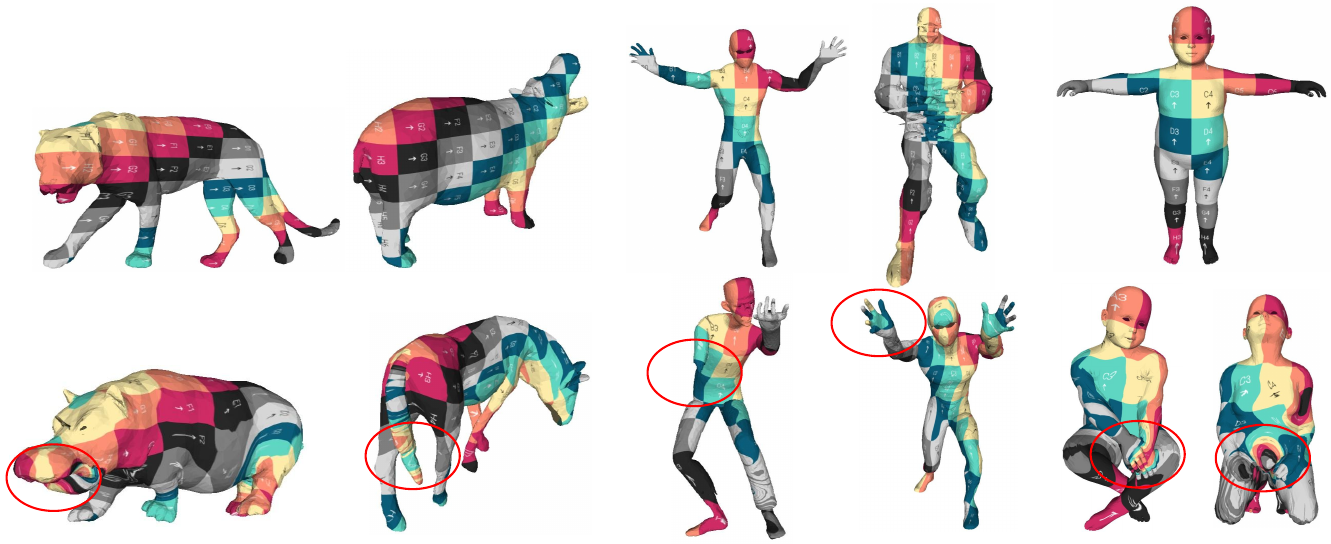}
    \caption{Examples of failure cases of our method. The first row displays the source shapes, while the second row shows the target shapes. The first and second columns illustrate cases with large distortions between the source and target models, such as significant discrepancies between the mouth and the tail (SMAL). In the third column, the source model’s right hand is intact, whereas the target model’s right hand is mutilated. In the fourth column, the source model has crossed hands, while the target model’s hands are separated. These structural differences result in matching failures (DT4D-H). Lastly, the fifth and sixth columns show examples of severe topological noise in the target model, which may also lead to local mismatches (TOPKIDS).}
    \label{fig:bad example}
\end{figure}
\section{Discussion and Limitations}
Our constraint is induced from the isometric assumption, which explicitly models the functional maps, pointwise maps, and a set of filter functions. By using the proposed constraint as the loss function, as well as using extracted features compute both functional maps and pointwise maps, in a differentiable way during training. Our approach thus ensures their simultaneous optimization and promotes their frequency awareness, resulting in not only do the learned features thus extract accurate and robust shape features, but the data-driven filter functions encode significant and informative frequency information. Moreover, the optimized filter functions and extracted features are adequately adopted by our refinement strategy to compute the final correspondence during inference, leading to more accurate and robust matching results impulsively. Therefore, our approach can gracefully handle a range of matching scenarios, even though non-isometric matching.

Although our method achieves state-of-the-art performance in many challenging matching scenarios, its accuracy decreases under certain extreme conditions. Figure \ref{fig:bad example} presents failure cases that highlight the challenges of matching in harsh environments, such as large non-isometric deformations and severe topological noise. Additionally, like other deep functional map methods, our approach requires computing a dense soft correspondence matrix, which demands excessive memory for shapes with a large number of vertices. To mitigate this issue, spectral-preserving simplification techniques \cite{yazgan2023partition} can be employed to accelerate computation by significantly reducing the complexity of the mesh representation while maintaining accuracy.

Furthermore, our method relies on the Polynomial Coefficient Decomposition technique (see Section~\ref{app: Jacobi}) to enhance generalization performance. 
As orthonormal polynomials tend to introduce higher-order frequency information and may lead to overfitting. A promising direction for future work could involve representing filter functions with smooth and geometrically meaningful basis functions, such as heat kernels, wavelet kernels, or other alternatives. This approach can inspire further exploration into designing better spectral filters for improved performance, much like how spectral filter design is a
critical aspect of spectral convolution in graph neural networks. We anticipate this will be a highly interesting and valuable research direction for the future.
\section{Conclusion}
We propose a novel and general constraint to enhance frequency awareness for functional map estimation. Building on this constraint, we develop a learning-based architecture for deformable shape matching by leveraging a set of learnable spectral filters. Our constraint serves a dual purpose: it can be employed as a loss function to couple functional maps, pointwise maps, and filter functions during the training phase, and it can also act as an effective refinement strategy to improve the final pointwise map for more robust and accurate correspondences during inference. Extensive experiments across diverse datasets demonstrate that our approach outperforms existing state-of-the-art methods, particularly in challenging scenarios such as non-isometric and inconsistent topology datasets. The source code to replicate our results will be available upon publication.
\section{Acknowledgments}
We wish to thank the anonymous reviewers for their valuable comments and helpful suggestions. This work was supported by the Natural Science Foundation of China (Nos.62172447, 62271452, 62302530), National Key Research and Development Program of China (No.2023YFB4502400), Hunan Provincial Natural Science Foundation of China (No.2023JJ40769), the Key Project of Education Department of Hunan Province (No.23A0641), and the Changsha Natural Science Foundation (No.kq2402092). Additionally, we are grateful to the High Performance Computing Center of Central South University for partial support of this work.

\bibliography{DeepFAFP}

@InProceedings{Aubry2011The,
  author    = {M. {Aubry} and U. {Schlickewei} and D. {Cremers}},
  booktitle = {2011 IEEE International Conference on Computer Vision Workshops (ICCV Workshops)},
  title     = {The wave kernel signature: A quantum mechanical approach to shape analysis},
  year      = {2011},
  pages     = {1626-1633},
  doi       = {10.1109/ICCVW.2011.6130444},
}

@InProceedings{Bogo2014FAUST,
  author    = {F. {Bogo} and J. {Romero} and M. {Loper} and M. J. {Black}},
  booktitle = {IEEE Conference on Computer Vision and Pattern Recognition (CVPR)},
  title     = {FAUST: Dataset and Evaluation for 3D Mesh Registration},
  year      = {2014},
  pages     = {3794-3801},
  doi       = {10.1109/CVPR.2014.491},
}

@Article{hammond2011wavelets,
  author  = {Hammond, David K and Vandergheynst, Pierre and Gribonval, Remi},
  title   = {Wavelets on graphs via spectral graph theory},
  journal = {Applied and Computational Harmonic Analysis},
  year    = {2011},
  volume  = {30},
  number  = {2},
  pages   = {129--150},
}

@InCollection{meyer2003discrete,
  author    = {Meyer, Mark and Desbrun, Mathieu and Schr{\"o}der, Peter and Barr, Alan H},
  title     = {Discrete differential-geometry operators for triangulated 2-manifolds},
  booktitle = {Visualization and mathematics III},
  publisher = {Springer},
  year      = {2003},
  pages     = {35--57},
}

@Article{salti2014shot,
  author    = {Salti, Samuele and Tombari, Federico and Di Stefano, Luigi},
  title     = {SHOT: Unique signatures of histograms for surface and texture description},
  journal   = {Computer Vision and Image Understanding},
  year      = {2014},
  volume    = {125},
  pages     = {251--264},
  publisher = {Elsevier},
}

@Article{Kim2011BIM,
  author     = {Kim, Vladimir G. and Lipman, Yaron and Funkhouser, Thomas},
  journal    = {ACM Trans. Graph.},
  title      = {Blended Intrinsic Maps},
  year       = {2011},
  issn       = {0730-0301},
  month      = jul,
  number     = {4},
  volume     = {30},
  abstract   = {This paper describes a fully automatic pipeline for finding an intrinsic map between two non-isometric, genus zero surfaces. Our approach is based on the observation that efficient methods exist to search for nearly isometric maps (e.g., M\"{o}bius Voting or Heat Kernel Maps), but no single solution found with these methods provides low-distortion everywhere for pairs of surfaces differing by large deformations. To address this problem, we suggest using a weighted combination of these maps to produce a "blended map." This approach enables algorithms that leverage efficient search procedures, yet can provide the flexibility to handle large deformations.The main challenges of this approach lie in finding a set of candidate maps {mi} and their associated blending weights {bi(p)} for every point p on the surface. We address these challenges specifically for conformal maps by making the following contributions. First, we provide a way to blend maps, defining the image of p as the weighted geodesic centroid of mi(p). Second, we provide a definition for smooth blending weights at every point p that are proportional to the area preservation of mi at p. Third, we solve a global optimization problem that selects candidate maps based both on their area preservation and consistency with other selected maps. During experiments with these methods, we find that our algorithm produces blended maps that align semantic features better than alternative approaches over a variety of data sets.},
  address    = {New York, NY, USA},
  articleno  = {79},
  doi        = {10.1145/2010324.1964974},
  issue_date = {July 2011},
  keywords   = {inter-surface correspondences, inter-surface map},
  numpages   = {12},
  publisher  = {Association for Computing Machinery},
}

@InProceedings{melzi2019matching,
  author    = {Melzi, S. and Marin, R. and Rodolà, E. and Castellani, U. and Ren, J. and Poulenard, A. and Wonka, P. and Ovsjanikov, M.},
  booktitle = {Eurographics Workshop on 3D Object Retrieval},
  title     = {{Matching Humans with Different Connectivity}},
  year      = {2019},
  publisher = {The Eurographics Association},
  doi       = {10.2312/3dor.20191070},
  isbn      = {978-3-03868-077-2},
  issn      = {1997-0471},
}

@Article{Ovsjanikov2012,
  author     = {Ovsjanikov, Maks and Ben-Chen, Mirela and Solomon, Justin and Butscher, Adrian and Guibas, Leonidas},
  journal    = {ACM Trans. Graph.},
  title      = {Functional Maps: A Flexible Representation of Maps between Shapes},
  year       = {2012},
  issn       = {0730-0301},
  month      = jul,
  number     = {4},
  volume     = {31},
  articleno  = {30},
  doi        = {10.1145/2185520.2185526},
  issue_date = {July 2012},
  keywords   = {shape matching, representation, correspondence},
  numpages   = {11},
  publisher  = {Association for Computing Machinery},
}

@Article{Rodola2017,
  author   = {Rodol{\`{a}}, E. and Cosmo, L. and Bronstein, M. M. and Torsello, A. and Cremers, D.},
  journal  = {Computer Graphics Forum},
  title    = {{Partial Functional Correspondence}},
  year     = {2017},
  issn     = {14678659},
  number   = {1},
  pages    = {222--236},
  volume   = {36},
  doi      = {10.1111/cgf.12797},
  keywords = {I.3.5 [Computational Graphics]: Computational Geom, functional maps, partial similarity, shape matching},
}

@Article{Nogneng2017,
  author  = {Nogneng, Dorian and Ovsjanikov, Maks},
  journal = {Computer Graphics Forum},
  title   = {Informative Descriptor Preservation via Commutativity for Shape Matching},
  year    = {2017},
  number  = {2},
  pages   = {259-267},
  volume  = {36},
  doi     = {10.1111/cgf.13124},
}

@Article{sahilliouglu2020recent,
  author    = {Sahillio{\u{g}}lu, Yusuf},
  title     = {Recent advances in shape correspondence},
  journal   = {The Visual Computer},
  year      = {2020},
  volume    = {36},
  number    = {8},
  pages     = {1705--1721},
  publisher = {Springer},
}

@Article{Litany2017,
  author   = {Litany, O. and Rodol{\`{a}}, E. and Bronstein, A. M. and Bronstein, M. M.},
  journal  = {Computer Graphics Forum},
  title    = {{Fully Spectral Partial Shape Matching}},
  year     = {2017},
  issn     = {14678659},
  number   = {2},
  pages    = {247--258},
  volume   = {36},
  abstract = {We propose an efficient procedure for calculating partial dense intrinsic correspondence between deformable shapes performed entirely in the spectral domain. Our technique relies on the recently introduced partial functional maps formalism and on the joint approximate diagonalization (JAD) of the Laplace-Beltrami operators previously introduced for matching non-isometric shapes. We show that a variant of the JAD problem with an appropriately modified coupling term (surprisingly) allows to construct quasi-harmonic bases localized on the latent corresponding parts. This circumvents the need to explicitly compute the unknown parts by means of the cumbersome alternating minimization used in the previous approaches, and allows performing all the calculations in the spectral domain with constant complexity independent of the number of shape vertices. We provide an extensive evaluation of the proposed technique on standard non-rigid correspondence benchmarks and show state-of-the-art performance in various settings, including partiality and the presence of topological noise.},
  doi      = {10.1111/cgf.13123},
  file     = {:C$\backslash$:/Users/HuLing/AppData/Local/Mendeley Ltd./Mendeley Desktop/Downloaded/Litany et al. - 2017 - Fully Spectral Partial Shape Matching.pdf:pdf},
  keywords = {Categories and Subject Descriptors (according to A, I.3.5 [Computer Graphics]: Computational Geometry},
}

@Article{Huang2020,
  author   = {Huang, Ruqi and Ren, Jing and Wonka, Peter and Ovsjanikov, Maks},
  journal  = {Computer Graphics Forum},
  title    = {Consistent ZoomOut: Efficient Spectral Map Synchronization},
  year     = {2020},
  number   = {5},
  pages    = {265-278},
  volume   = {39},
  abstract = {Abstract In this paper, we propose a novel method, which we call Consistent ZoomOut, for efficiently refining correspondences among deformable 3D shape collections, while promoting the resulting map consistency. Our formulation is closely related to a recent unidirectional spectral refinement framework, but naturally integrates map consistency constraints into the refinement. Beyond that, we show further that our formulation can be adapted to recover the underlying isometry among near-isometric shape collections with a theoretical guarantee, which is absent in the other spectral map synchronization frameworks. We demonstrate that our method improves the accuracy compared to the competing methods when synchronizing correspondences in both near-isometric and heterogeneous shape collections, but also significantly outperforms the baselines in terms of map consistency.},
  doi      = {https://doi.org/10.1111/cgf.14084},
  eprint   = {https://onlinelibrary.wiley.com/doi/pdf/10.1111/cgf.14084},
}

@inproceedings{Kingma15,
  author    = {Diederik P. Kingma and
                  Jimmy Ba},
  title     = {Adam: A Method for Stochastic Optimization},
  booktitle = {International Conference on Learning Representations (ICLR)},
  year      = {2015},
  pages     = {1--15},
}

@Article{lhllcgf2021,
  author   = {Li, Qinsong and Hu, Ling and Liu, Shengjun and Yang, Dangfu and Liu, Xinru},
  journal  = {Computer Graphics Forum},
  title    = {{Anisotropic Spectral Manifold Wavelet Descriptor}},
  year     = {2021},
  issn     = {14678659},
  number   = {1},
  pages    = {81--96},
  volume   = {40},
  doi      = {10.1111/cgf.14120},
  keywords = {3D shape matching, signal processing, wavelets},
}

@InProceedings{litany2017deep,
  author    = {Litany, Or and Remez, Tal and Rodolà, Emanuele and Bronstein, Alex and Bronstein, Michael},
  booktitle = {IEEE International Conference on Computer Vision (ICCV)},
  title     = {Deep Functional Maps: Structured Prediction for Dense Shape Correspondence},
  year      = {2017},
  pages     = {5660-5668},
}

@InProceedings{halimi2019unsupervised,
  author    = {Halimi, Oshri and Litany, Or and Rodolà, Emanuele Rodolà and Bronstein, Alex M. and Kimmel, Ron},
  booktitle = {IEEE/CVF Conference on Computer Vision and Pattern Recognition (CVPR)},
  title     = {Unsupervised Learning of Dense Shape Correspondence},
  year      = {2019},
  pages     = {4365-4374},
  doi       = {10.1109/CVPR.2019.00450},
}

@InProceedings{roufosse2019unsupervised,
  author    = {Roufosse, Jean-Michel and Sharma, Abhishek and Ovsjanikov, Maks},
  booktitle = {IEEE/CVF International Conference on Computer Vision (ICCV)},
  title     = {Unsupervised Deep Learning for Structured Shape Matching},
  year      = {2019},
  pages     = {1617-1627},
  doi       = {10.1109/ICCV.2019.00170},
}

@InProceedings{donati2020deep,
  author    = {Donati, Nicolas and Sharma, Abhishek and Ovsjanikov, Maks},
  booktitle = {IEEE/CVF Conference on Computer Vision and Pattern Recognition (CVPR)},
  title     = {Deep Geometric Functional Maps: Robust Feature Learning for Shape Correspondence},
  year      = {2020},
  pages     = {8589-8598},
  doi       = {10.1109/CVPR42600.2020.00862},
}

@InProceedings{sharma2020weakly,
  author    = {Sharma, Abhishek and Ovsjanikov, Maks},
  booktitle = {Advances in Neural Information Processing Systems (NeurIPS)},
  title     = {Weakly supervised deep functional maps for shape matching},
  year      = {2020},
  volume    = {33},
}

@InProceedings{attaiki2021dpfm,
  author       = {Attaiki, Souhaib and Pai, Gautam and Ovsjanikov, Maks},
  booktitle    = {International Conference on 3D Vision (3DV)},
  title        = {DPFM: Deep Partial Functional Maps},
  year         = {2021},
  organization = {IEEE},
  pages        = {175-185},
  doi          = {10.1109/3DV53792.2021.00040},
}

@InProceedings{Hu2021,
  author    = {Hu, Ling and Li, Qinsong and Liu, Shengjun and Liu, Xinru},
  booktitle = {IEEE/CVF Conference on Computer Vision and Pattern Recognition (CVPR)},
  title     = {Efficient deformable shape correspondence via multiscale spectral manifold wavelets preservation},
  year      = {2021},
  pages     = {14531-14540},
  doi       = {10.1109/CVPR46437.2021.01430},
}

@InProceedings{Cuturi2013,
  author    = {Cuturi, Marco},
  booktitle = {Advances in Neural Information Processing Systems (NeurIPS)},
  title     = {Sinkhorn Distances: Lightspeed Computation of Optimal Transport},
  year      = {2013},
  volume    = {26},
}

@Article{Sharp2020,
  author     = {Sharp, Nicholas and Attaiki, Souhaib and Crane, Keenan and Ovsjanikov, Maks},
  journal    = {ACM Trans. Graph.},
  title      = {DiffusionNet: Discretization Agnostic Learning on Surfaces},
  year       = {2022},
  issn       = {0730-0301},
  month      = {mar},
  number     = {3},
  volume     = {41},
  articleno  = {27},
  doi        = {10.1145/3507905},
  issue_date = {June 2022},
  keywords   = {geometry processing, Geometric deep learning, partial differential equations, discrete differential geometry},
  numpages   = {16},
  publisher  = {Association for Computing Machinery},
}

@InProceedings{Zuffi2017,
  author    = {Zuffi, Silvia and Kanazawa, Angjoo and Jacobs, David W. and Black, Michael J.},
  booktitle = {IEEE Conference on Computer Vision and Pattern Recognition (CVPR)},
  title     = {3D Menagerie: Modeling the 3D Shape and Pose of Animals},
  year      = {2017},
  pages     = {5524-5532},
  doi       = {10.1109/CVPR.2017.586},
}

@InProceedings{donati2022deep,
  author    = {Donati, Nicolas and Corman, Etienne and Ovsjanikov, Maks},
  booktitle = {IEEE/CVF Conference on Computer Vision and Pattern Recognition (CVPR)},
  title     = {Deep orientation-aware functional maps: Tackling symmetry issues in Shape Matching},
  year      = {2022},
  pages     = {732-741},
  doi       = {10.1109/CVPR52688.2022.00082},
}

@inproceedings{cao2024revisiting,
 author = {Cao, Dongliang and Roetzer, Paul and Bernard, Florian},
 title = {Revisiting Map Relations for Unsupervised Non-Rigid Shape Matching},
 booktitle = {3DV},
 year = {2024}
}

@inproceedings{2023ElasticBasis,
author = {Hartwig, Florine and Sassen, Josua and Azencot, Omri and Rumpf, Martin and Ben-Chen, Mirela},
title = {An Elastic Basis for Spectral Shape Correspondence},
year = {2023},
isbn = {9798400701597},
publisher = {Association for Computing Machinery},
address = {New York, NY, USA},
booktitle = {ACM SIGGRAPH 2023 Conference Proceedings},
articleno = {58},
numpages = {11},
series = {SIGGRAPH '23}
}

@Article{liu2022wtfm,
  author  = {Shengjun Liu and Haojun Xu and Dong-ming Yan and Ling Hu and Xinru Liu and Qinsong Li},
  journal = {Computer Graphics Forum},
  title   = {WTFM Layer: An Effective Map Extractor for Unsupervised Shape Correspondence},
  year    = {2022},
  number  = {7},
  pages   = {51--61},
  volume  = {41},
  doi     = {10.1111/cgf.14656},
}

@InProceedings{fan2022coherent,
  author    = {Fan, Aoxiang and Ma, Jiayi and Tian, Xin and , Xiaoguang and Liu, Wei},
  booktitle = {IEEE/CVF Conference on Computer Vision and Pattern Recognition (CVPR)},
  title     = {Coherent Point Drift Revisited for Non-Rigid Shape Matching and Registration},
  year      = {2022},
  pages     = {1424--1434},
}

@InProceedings{Thomas2019,
  author    = {Thomas, Hugues and Qi, Charles R. and Deschaud, Jean-Emmanuel and Marcotegui, Beatriz and Goulette, François and Guibas, Leonidas},
  booktitle = {IEEE/CVF International Conference on Computer Vision (ICCV)},
  title     = {KPConv: Flexible and Deformable Convolution for Point Clouds},
  year      = {2019},
  pages     = {6410-6419},
  doi       = {10.1109/ICCV.2019.00651},
}

@InProceedings{Eisenberger2021,
  author    = {Eisenberger, Marvin and Novotny, David and Kerchenbaum, Gael and Labatut, Patrick and Neverova, Natalia and Cremers, Daniel and Vedaldi, Andrea},
  booktitle = {IEEE/CVF Conference on Computer Vision and Pattern Recognition (CVPR)},
  title     = {NeuroMorph: Unsupervised Shape Interpolation and Correspondence in One Go},
  year      = {2021},
  pages     = {7469-7479},
  doi       = {10.1109/CVPR46437.2021.00739},
}

@Article{Sumner2004,
  author     = {Sumner, Robert W. and Popovi\'{c}, Jovan},
  journal    = {ACM Trans. Graph.},
  title      = {Deformation Transfer for Triangle Meshes},
  year       = {2004},
  issn       = {0730-0301},
  month      = {aug},
  number     = {3},
  pages      = {399–405},
  volume     = {23},
  abstract   = {Deformation transfer applies the deformation exhibited by a source triangle mesh onto a different target triangle mesh. Our approach is general and does not require the source and target to share the same number of vertices or triangles, or to have identical connectivity. The user builds a correspondence map between the triangles of the source and those of the target by specifying a small set of vertex markers. Deformation transfer computes the set of transformations induced by the deformation of the source mesh, maps the transformations through the correspondence from the source to the target, and solves an optimization problem to consistently apply the transformations to the target shape. The resulting system of linear equations can be factored once, after which transferring a new deformation to the target mesh requires only a backsubstitution step. Global properties such as foot placement can be achieved by constraining vertex positions. We demonstrate our method by retargeting full body key poses, applying scanned facial deformations onto a digital character, and remapping rigid and non-rigid animation sequences from one mesh onto another.},
  address    = {New York, NY, USA},
  doi        = {10.1145/1015706.1015736},
  issue_date = {August 2004},
  keywords   = {Correspondence, Animation, Deformations},
  numpages   = {7},
  publisher  = {Association for Computing Machinery},
}

@Article{HU2023101189,
  author   = {Ling Hu and Qinsong Li and Shengjun Liu and Dong-Ming Yan and Haojun Xu and Xinru Liu},
  journal  = {Graphical Models},
  title    = {RFMNet: Robust Deep Functional Maps for unsupervised non-rigid shape correspondence},
  year     = {2023},
  issn     = {1524-0703},
  pages    = {101189},
  volume   = {129},
  abstract = {In traditional deep functional maps for non-rigid shape correspondence, estimating a functional map including high-frequency information requires enough linearly independent features via the least square method, which is prone to be violated in practice, especially at an early stage of training, or costly post-processing, e.g. ZoomOut. In this paper, we propose a novel method called RFMNet (Robust Deep Functional Map Networks), which jointly considers training stability and more geometric shape features than previous works. We directly first produce a pointwise map by resorting to optimal transport and then convert it to an initial functional map. Such a mechanism mitigates the requirements for the descriptor and avoids the training instabilities resulting from the least square solver. Benefitting from the novel strategy, we successfully integrate a state-of-the-art geometric regularization for further optimizing the functional map, which substantially filters the initial functional map. We show our novel computing functional map module brings more stable training even under encoding the functional map with high-frequency information and faster convergence speed. Considering the pointwise and functional maps, an unsupervised loss is presented for penalizing the correspondence distortion of Delta functions between shapes. To catch discretization-resistant and orientation-aware shape features with our network, we utilize DiffusionNet as a feature extractor. Experimental results demonstrate our apparent superiority in correspondence quality and generalization across various shape discretizations and different datasets compared to the state-of-the-art learning methods.},
  doi      = {https://doi.org/10.1016/j.gmod.2023.101189},
  keywords = {Shape correspondence, Functional maps, Unsupervised learning, Optimal transport},
}

@InProceedings{Sun_2023_ICCV,
    author    = {Sun, Mingze and Mao, Shiwei and Jiang, Puhua and Ovsjanikov, Maks and Huang, Ruqi},
    title     = {Spatially and Spectrally Consistent Deep Functional Maps},
    booktitle = {Proceedings of the IEEE/CVF International Conference on Computer Vision (ICCV)},
    month     = {October},
    year      = {2023},
    pages     = {14497-14507}
}

@InProceedings{2022SmoothNonRigidShapeMatchingviaEffectiveDirichletEnergyOptimization,
  author    = {Magnet, Robin and Ren, Jing and Sorkine-Hornung, Olga and Ovsjanikov, Maks},
  booktitle = {2022 {International} {Conference} on {3D} {Vision} ({3DV})},
  title     = {Smooth {Non}-{Rigid} {Shape} {Matching} via {Effective} {Dirichlet} {Energy} {Optimization}},
  year      = {2022},
  address   = {Prague, Czech Republic},
  month     = sep,
  pages     = {495--504},
  publisher = {IEEE},
  abstract  = {We introduce pointwise map smoothness via the Dirichlet energy into the functional map pipeline, and propose an algorithm for optimizing it efficiently, which leads to highquality results in challenging settings. Specifically, we first formulate the Dirichlet energy of the pulled-back shape coordinates, as a way to evaluate smoothness of a pointwise map across discrete surfaces. We then extend the recently proposed discrete solver and show how a strategy based on auxiliary variable reformulation allows us to optimize pointwise map smoothness alongside desirable functional map properties such as bijectivity. This leads to an efficient map refinement strategy that simultaneously improves functional and point-to-point correspondences, obtaining smooth maps even on non-isometric shape pairs. Moreover, we demonstrate that several previously proposed methods for computing smooth maps can be reformulated as variants of our approach, which allows us to compare different formulations in a consistent framework. Finally, we compare these methods both on existing benchmarks and on a new rich dataset that we introduce, which contains non-rigid, non-isometric shape pairs with inter-category and crosscategory correspondences. Our work leads to a general framework for optimizing and analyzing map smoothness both conceptually and in challenging practical settings.},
  doi       = {10.1109/3DV57658.2022.00061},
  file      = {2022_Magnet et al_Smooth Non-Rigid Shape Matching via Effective Dirichlet Energy Optimization.pdf:D\:\\zotero\\storage\\UY7FRRDH\\Magnet 等 - 2022 - Smooth Non-Rigid Shape Matching via Effective Diri.pdf:application/pdf},
  isbn      = {978-1-66545-670-8},
  language  = {en},
  urldate   = {2023-05-16},
}

@Article{Liu2023,
  author   = {Liu, Shengjun and Luo, Feifan and Li, Qinsong and Liu, Xinru and Hu, Ling},
  journal  = {The Visual Computer},
  title    = {AWEDD: a descriptor simultaneously encoding multiscale extrinsic and intrinsic shape features},
  year     = {2023},
  issn     = {1432-2315},
  abstract = {We construct a novel descriptor called anisotropic wavelet energy decomposition descriptor (AWEDD) for non-rigid shape analysis, based on anisotropic diffusion geometry. We first extend the Dirichlet energy of the vertex coordinate function to an anisotropic version, then use multiscale anisotropic spectral manifold wavelets to decompose the Dirichlet energy to all vertices and collect local energy at each vertex to form AWEDD. AWEDD simultaneously encodes multiscale extrinsic and intrinsic shape features, which are more informative and robust than purely intrinsic or extrinsic descriptors. And the introduction of anisotropy endows AWEDD with stronger abilities of feature discrimination and intrinsic symmetry identification. Our results demonstrate that AWEDD is more discriminative than current state-of-the-art descriptors. In addition, we show that AWEDD is an excellent choice of the initial inputs for various shape analysis approaches, such as functional map pipelines and deep convolutional architectures.},
  doi      = {10.1007/s00371-023-02935-6},
  refid    = {Liu2023},
}

@Article{Cosmo2022,
  author     = {Cosmo, Luca and Minello, Giorgia and Bronstein, Michael and Rodolà, Emanuele and Rossi, Luca and Torsello, Andrea},
  journal    = {International Journal of Computer Vision},
  title      = {{3D} {Shape} {Analysis} {Through} a {Quantum} {Lens}: the {Average} {Mixing} {Kernel} {Signature}},
  year       = {2022},
  issn       = {0920-5691, 1573-1405},
  month      = jun,
  number     = {6},
  pages      = {1474--1493},
  volume     = {130},
  abstract   = {The Average Mixing Kernel Signature is a novel spectral signature for points on non-rigid three-dimensional shapes. It is based on a quantum exploration process of the shape surface, where the average transition probabilities between the points of the shape are summarised in the ﬁnite-time average mixing kernel. A band-ﬁltered spectral analysis of this kernel then yields the AMKS. Crucially, we show that opting for a ﬁnite time-evolution allows the signature to account for a mixing of the Laplacian eigenspaces, similar to what is observed in the presence of noise, explaining the increased noise robustness of this signature when compared to alternative signatures. We perform an extensive experimental analysis of the AMKS under a wide range of problem scenarios, evaluating the performance of our descriptor under different sources of noise (vertex jitter and topological), shape representations (mesh and point clouds), as well as when only a partial view of the shape is available. Our experiments show that the AMKS consistently outperforms two of the most widely used spectral signatures, the Heat Kernel Signature and the Wave Kernel Signature, and suggest that the AMKS should be the signature of choice for various compute vision problems, including as input of deep convolutional architectures for shape analysis.},
  doi        = {10.1007/s11263-022-01610-y},
  file       = {2022_Cosmo et al_3D Shape Analysis Through a Quantum Lens.pdf:D\:\\zotero\\storage\\RYWWBPYK\\2022_Cosmo et al_3D Shape Analysis Through a Quantum Lens.pdf:application/pdf},
  language   = {en},
  shorttitle = {{3D} {Shape} {Analysis} {Through} a {Quantum} {Lens}},
  urldate    = {2022-07-01},
}

@InProceedings{Vestner2017,
  author    = {Vestner, Matthias and Litman, Roee and Rodola, Emanuele and Bronstein, Alex and Cremers, Daniel},
  booktitle = {Proceedings of the IEEE Conference on Computer Vision and Pattern Recognition (CVPR)},
  title     = {Product Manifold Filter: Non-Rigid Shape Correspondence via Kernel Density Estimation in the Product Space},
  year      = {2017},
  month     = {July},
  pages     = {3327-3336},
}

@InProceedings{Xiang_2020_CVPR,
  author    = {Xiang, Rui and Lai, Rongjie and Zhao, Hongkai},
  booktitle = {Proceedings of the IEEE/CVF Conference on Computer Vision and Pattern Recognition (CVPR)},
  title     = {Efficient and Robust Shape Correspondence via Sparsity-Enforced Quadratic Assignment},
  year      = {2020},
  month     = {June},
  pages     = {9513-9522},
}

@Article{Ren2019,
  author   = {Ren, Jing and Panine, Mikhail and Wonka, Peter and Ovsjanikov, Maks},
  journal  = {Computer Graphics Forum},
  title    = {Structured {Regularization} of {Functional} {Map} {Computations}},
  year     = {2019},
  issn     = {14678659},
  number   = {5},
  pages    = {39--53},
  volume   = {38},
  abstract = {We consider the problem of non-rigid shape matching using the functional map framework. Specifically, we analyze a commonly used approach for regularizing functional maps, which consists in penalizing the failure of the unknown map to commute with the Laplace-Beltrami operators on the source and target shapes. We show that this approach has certain undesirable fundamental theoretical limitations, and can be undefined even for trivial maps in the smooth setting. Instead we propose a novel, theoretically well-justified approach for regularizing functional maps, by using the notion of the resolvent of the Laplacian operator. In addition, we provide a natural one-parameter family of regularizers, that can be easily tuned depending on the expected approximate isometry of the input shape pair. We show on a wide range of shape correspondence scenarios that our novel regularization leads to an improvement in the quality of the estimated functional, and ultimately pointwise correspondences before and after commonly-used refinement techniques.},
  doi      = {10.1111/cgf.13788},
  file     = {2019_Ren et al_Structured Regularization of Functional Map Computations.pdf:D\:\\zotero\\storage\\XDEA3H9P\\2019_Ren et al_Structured Regularization of Functional Map Computations.pdf:application/pdf},
  keywords = {• Computing methodologies → Shape analysis, CCS Concepts},
}

@InProceedings{Gao_2021_CVPR,
  author    = {Gao, Maolin and Lahner, Zorah and Thunberg, Johan and Cremers, Daniel and Bernard, Florian},
  booktitle = {Proceedings of the IEEE/CVF Conference on Computer Vision and Pattern Recognition (CVPR)},
  title     = {Isometric Multi-Shape Matching},
  year      = {2021},
  month     = {June},
  pages     = {14183-14193},
}

@Article{Huang2014,
  author     = {Huang, Qixing and Wang, Fan and Guibas, Leonidas},
  journal    = {ACM Trans. Graph.},
  title      = {Functional Map Networks for Analyzing and Exploring Large Shape Collections},
  year       = {2014},
  issn       = {0730-0301},
  month      = {jul},
  number     = {4},
  volume     = {33},
  abstract   = {The construction of networks of maps among shapes in a collection enables a variety of applications in data-driven geometry processing. A key task in network construction is to make the maps consistent with each other. This consistency constraint, when properly defined, leads not only to a concise representation of such networks, but more importantly, it serves as a strong regularizer for correcting and improving noisy initial maps computed between pairs of shapes in isolation. Up-to-now, however, the consistency constraint has only been fully formulated for point-based maps or for shape collections that are fully similar.In this paper, we introduce a framework for computing consistent functional maps within heterogeneous shape collections. In such collections not all shapes share the same structure --- different types of shared structure may be present within different (but possibly overlapping) sub-collections. Unlike point-based maps, functional maps can encode similarities at multiple levels of detail (points or parts), and thus are particularly suitable for coping with such diversity within a shape collection. We show how to rigorously formulate the consistency constraint in the functional map setting. The formulation leads to a powerful tool for computing consistent functional maps, and also for discovering shared structures, such as meaningful shape parts. We also show how to adapt the procedure for handling very large-scale shape collections. Experimental results on benchmark datasets show that the proposed framework significantly improves upon state-of-the-art data-driven techniques. We demonstrate the usefulness of the framework in shape co-segmentation and various shape exploration tasks.},
  address    = {New York, NY, USA},
  articleno  = {36},
  doi        = {10.1145/2601097.2601111},
  issue_date = {July 2014},
  keywords   = {shape exploration, shape analysis, functional maps},
  numpages   = {11},
  publisher  = {Association for Computing Machinery},
}

@InProceedings{Ayguen2020,
  author    = {Aygün, Mehmet and Lähner, Zorah and Cremers, Daniel},
  booktitle = {International {Conference} on {3D} {Vision} ({3DV})},
  title     = {Unsupervised {Dense} {Shape} {Correspondence} using {Heat} {Kernels}},
  year      = {2020},
  pages     = {573--582},
  doi       = {10.1109/3DV50981.2020.00067},
}

@InProceedings{eisenberger2020smooth,
  author    = {Eisenberger, Marvin and Lähner, Zorah and Cremers, Daniel},
  booktitle = {IEEE/CVF Conference on Computer Vision and Pattern Recognition (CVPR)},
  title     = {Smooth Shells: Multi-Scale Shape Registration With Functional Maps},
  year      = {2020},
  pages     = {12262-12271},
  doi       = {10.1109/CVPR42600.2020.01228},
}

@inproceedings{ren2021discrete,
  title={Discrete optimization for shape matching},
  author={Ren, Jing and Melzi, Simone and Wonka, Peter and Ovsjanikov, Maks},
  booktitle={Computer Graphics Forum},
  volume={40},
  number={5},
  pages={81--96},
  year={2021},
  organization={Wiley Online Library}
}

@article{eisenberger2020deep,
  title={Deep shells: Unsupervised shape correspondence with optimal transport},
  author={Eisenberger, Marvin and Toker, Aysim and Leal-Taix{\'e}, Laura and Cremers, Daniel},
  journal={Advances in Neural information processing systems},
  volume={33},
  pages={10491--10502},
  year={2020}
}

@InProceedings{li2022learning,
  author    = {Li, Lei and Donati, Nicolas and Ovsjanikov, Maks},
  booktitle = {Advances in Neural Information Processing Systems},
  title     = {Learning Multi-resolution Functional Maps with Spectral Attention for Robust Shape Matching},
  year      = {2022},
  pages     = {29336--29349},
  publisher = {Curran Associates, Inc.},
  volume    = {35},
}

@inproceedings{cao2022unsupervised,
  title={Unsupervised deep multi-shape matching},
  author={Cao, Dongliang and Bernard, Florian},
  booktitle={European Conference on Computer Vision},
  pages={55--71},
  year={2022},
  organization={Springer}
}

@Article{ren2018continuous,
  author     = {Ren, Jing and Poulenard, Adrien and Wonka, Peter and Ovsjanikov, Maks},
  journal    = {ACM Trans. Graph.},
  title      = {Continuous and Orientation-Preserving Correspondences via Functional Maps},
  year       = {2018},
  issn       = {0730-0301},
  month      = {dec},
  number     = {6},
  volume     = {37},
  articleno  = {248},
  doi        = {10.1145/3272127.3275040},
  issue_date = {December 2018},
  keywords   = {functional maps},
  numpages   = {16},
}

@article{tao2023longnn,
  title={LONGNN: Spectral GNNs with Learnable Orthonormal Basis},
  author={Tao, Qian and Wang, Zhen and Yu, Wenyuan and Li, Yaliang and Wei, Zhewei},
  journal={arXiv preprint arXiv:2303.13750},
  year={2023}
}

@inproceedings{wang2022powerful,
  title={How powerful are spectral graph neural networks},
  author={Wang, Xiyuan and Zhang, Muhan},
  booktitle={International Conference on Machine Learning},
  pages={23341--23362},
  year={2022},
  organization={PMLR}
}

@inproceedings{sun2009concise,
  title={A concise and provably informative multi-scale signature based on heat diffusion},
  author={Sun, Jian and Ovsjanikov, Maks and Guibas, Leonidas},
  booktitle={Computer graphics forum},
  volume={28},
  number={5},
  pages={1383--1392},
  year={2009},
  organization={Wiley Online Library}
}

@inproceedings{vestner2017kernelmatching,
  title={Efficient deformable shape correspondence via kernel matching},
  author={Vestner, Matthias and L{\"a}hner, Zorah and Boyarski, Amit and Litany, Or and Slossberg, Ron and Remez, Tal and Rodola, Emanuele and Bronstein, Alex and Bronstein, Michael and Kimmel, Ron and others},
  booktitle={2017 international conference on 3D vision (3DV)},
  pages={517--526},
  year={2017},
  organization={IEEE}
}

@Article{leonardi2013tight,
  author    = {Leonardi, Nora and Van De Ville, Dimitri},
  title     = {Tight wavelet frames on multislice graphs},
  journal   = {IEEE Transactions on Signal Processing},
  year      = {2013},
  volume    = {61},
  number    = {13},
  pages     = {3357--3367},
  publisher = {IEEE},
}

@InProceedings{lahner2016shrec,
  author    = {Lähner, Z. and Rodolà, E. and Bronstein, M. M. and Cremers, D. and Burghard, O. and Cosmo, L. and Dieckmann, A. and Klein, R. and Sahillioglu, Y.},
  booktitle = {Eurographics Workshop on 3D Object Retrieval},
  title     = {{Matching of Deformable Shapes with Topological Noise}},
  year      = {2016},
  publisher = {The Eurographics Association},
  doi       = {10.2312/3dor.20161088},
  isbn      = {978-3-03868-004-8},
  issn      = {1997-0471},
}

@Article{Donati2022,
  author    = {Donati, Nicolas and Corman, Etienne and Melzi, Simone and Ovsjanikov, Maks},
  journal   = {Computer Graphics Forum},
  title     = {Complex Functional Maps: A Conformal Link Between Tangent Bundles},
  year      = {2022},
  issn      = {1467-8659},
  number    = {1},
  pages     = {317-334},
  volume    = {41},
  doi       = {10.1111/cgf.14437},
  publisher = {© 2022 Eurographics ‐ The European Association for Computer Graphics and John Wiley & Sons Ltd},
}

@Article{Cao2023,
  author     = {Cao, Dongliang and Roetzer, Paul and Bernard, Florian},
  journal    = {ACM Trans. Graph.},
  title      = {Unsupervised Learning of Robust Spectral Shape Matching},
  year       = {2023},
  issn       = {0730-0301},
  month      = {jul},
  number     = {4},
  volume     = {42},
  abstract   = {We propose a novel learning-based approach for robust 3D shape matching. Our method builds upon deep functional maps and can be trained in a fully unsupervised manner. Previous deep functional map methods mainly focus on predicting optimised functional maps alone, and then rely on off-the-shelf post-processing to obtain accurate point-wise maps during inference. However, this two-stage procedure for obtaining point-wise maps often yields sub-optimal performance. In contrast, building upon recent insights about the relation between functional maps and point-wise maps, we propose a novel unsupervised loss to couple the functional maps and point-wise maps, and thereby directly obtain point-wise maps without any post-processing. Our approach obtains accurate correspondences not only for near-isometric shapes, but also for more challenging non-isometric shapes and partial shapes, as well as shapes with different discretisation or topological noise. Using a total of nine diverse datasets, we extensively evaluate the performance and demonstrate that our method substantially outperforms previous state-of-the-art methods, even compared to recent supervised methods. Our code is available at https://github.com/dongliangcao/Unsupervised-Learning-of-Robust-Spectral-Shape-Matching.},
  address    = {New York, NY, USA},
  articleno  = {132},
  doi        = {10.1145/3592107},
  issue_date = {August 2023},
  keywords   = {deep learning, shape matching, functional maps},
  numpages   = {15},
  publisher  = {Association for Computing Machinery},
}

@article{sahilliouglu2022augmented,
  title={Augmented paths and reodesics for topologically-stable matching},
  author={Sahillio{\u{g}}lu, Yusuf and Horsman, Devin},
  journal={ACM Transactions on Graphics},
  volume={42},
  number={2},
  pages={1--15},
  year={2022},
  publisher={ACM New York, NY}
}

@article{yazgan2023partition,
  title={A partition based method for spectrum-preserving mesh simplification},
  author={Yazgan, Misranur and Sahillio{\u{g}}lu, Yusuf},
  journal={IEEE Transactions on Visualization and Computer Graphics},
  volume={30},
  number={10},
  pages={6839--6850},
  year={2023},
  publisher={IEEE}
}
\bibliographystyle{IEEEtran}


\vfill

\clearpage
\setcounter{page}{15}
\twocolumn[
    \centering
    \Large
    \textbf{Deep Frequency Awareness Functional Maps for Robust Shape Matching} \\
    \vspace{0.5em}Supplementary Document \\
    \vspace{1.0em}
] 
\setcounter{section}{0}
\setcounter{table}{0}
\setcounter{figure}{0}
\setcounter{equation}{0}
\renewcommand{\thetable}{\thesection\arabic{table}}
\renewcommand{\thefigure}{\thesection\arabic{figure}}
\renewcommand{\theequation}{\thesection\arabic{equation}}

\section{Proofs of Remark \ref{pro: remark_2}}\label{app: Pro}

For convenience, we restate the remark as follows:
\newtheorem{thm_filter}{Remark}[section]
\begin{thm_filter}
	~If the set of filter functions $\{h_s(\lambda)\}_{s=1}^S$ satisfy the Consistency Condition that 
	$G(\lambda) = \sum_{s} h_s^{2}\left(\lambda\right) \neq 0, \forall \lambda,$
then  the functional map $ \mathbf{C_\mathcal{NM}} $ in Eq.( \ref{equ: compute C by fixing Pi})
can be obtained via 
\begin{equation*}
\begin{aligned}
	\mathbf{C}_\mathcal{NM} =  \left(\sum_{s} h_s\left(\Lambda_{\mathcal{M}}\right) \mathbf{C}^{\Pi}_\mathcal{NM} h_s\left( \Lambda_{\mathcal{N}}\right)\right)  {\left(G^{-1}(\Lambda_{\mathcal{N}})\right)}, 
 \end{aligned}
\end{equation*}
where $G(\Lambda_{\mathcal{N}}) =\sum_{s} h_s^{2}\left(\Lambda_{\mathcal{N}}\right) $.

\begin{proof}
Since the spectral filter operator is invariant to isometric deformation, if the pointwise map $\Pi_\mathcal{MN} $ also is an isometric map between shapes, we can obtain an analytic solution for $ \mathbf{C}_\mathcal{NM} $ via least square sense, the detailed derivation is given below: 
$$\begin{aligned}
	\mathbf{C}_\mathcal{NM}  
	 H\left(\Lambda_{\mathcal{N}}\right) & = H\left(\Lambda_{\mathcal{M}}\right)*\mathbf{C}^{\Pi}_\mathcal{NM}, \\
  	\mathbf{C}_\mathcal{NM}  H\left(\Lambda_{\mathcal{N}}\right) H\left(\Lambda_{\mathcal{N}}\right)^{\mathrm{T}}   &= \left(H\left(\Lambda_{\mathcal{M}}\right)*\mathbf{C}^{\Pi}_\mathcal{NM}\right) H\left( \Lambda_{\mathcal{N}}\right)^{\mathrm{T}},\\
	\mathbf{C}_\mathcal{NM} \sum_{s}  h_s^{2}\left(\Lambda_{\mathcal{N}}\right) & =\sum_{s}   h_s\left(\Lambda_{\mathcal{M}}\right) \mathbf{C}^{\Pi}_\mathcal{NM} h_s\left( \Lambda_{\mathcal{N}}\right).
\end{aligned}$$
As $ G(\Lambda_{\mathcal{N}}) =\sum_{s} h_s^{2}\left(\Lambda_{\mathcal{N}}\right) \in \mathbb R ^{k \times k} $ is a diagonal matrix, if it satisfies the condition 
\begin{equation}\label{MCFP condition}
    G(\lambda) = \sum_{s} h_s^{2}\left(\lambda\right) \neq 0, \forall \lambda,
\end{equation}	
the matrix $ G(\Lambda_{\mathcal{N}})$ is invertible and computing its inverse is easy, and finally we have Eq.(\ref{MCFP compute C}).
\end{proof}

\end{thm_filter}

\section{Jacobi polynomials for Filter Functions}\label{app: Jacobi}
We will provide the details of Jacobi polynomials stated in Section \ref{sec: MSFOP} in the following. Among orthogonal polynomials, the Jacobi polynomials has a very general form, where some orthogonal bases are considered as its special cases, such as Chebyshev basis, Legendre basis, etc. The form of Jacobi polynomials $J_{l}^{a, b}$ can be expressed as 
\begin{equation*}
J_{0}^{a, b}({\lambda})=1, \\
J_{1}^{a, b}({\lambda})=\frac{a-b}{2}+\frac{a+b+2}{2} {\lambda},
\end{equation*}
for $l>=2$. 
\begin{equation*}
J_{l}^{a, b}({\lambda})=\left(\mu_{l} {\lambda}+\mu_{l}^{\prime}\right) J_{l-1}^{a, b}({\lambda})-\mu_{l}^{\prime \prime} J_{l-2}^{a, b}({\lambda}),
\end{equation*}
where
\begin{equation*}
\begin{aligned}
\mu_{l} & =\frac{(2l+a+b)(2l+a+b-1)}{2l(l+a+b)}, \\
\mu_{l}^{\prime} & =\frac{(2l+a+b-1)\left(a^{2}-b^{2}\right)}{2l(l+a+b)(2l+a+b-2)}, \\
\mu_{l}^{\prime \prime} & =\frac{(l+a-1)(l+b-1)(2 l+a+b)}{l(l+a+b)(2l+a+b-2)}. 
\end{aligned}
\end{equation*}
$J_{l}^{a, b}, l=0, 1, 2, ...$ are orthogonal w.r.t. the weight function $\left(1- \lambda \right)^{a} \left(1 + \lambda \right)^{b}$ on $[-1,1]$. 
$J_{l}^{a, b}$ is Chebyshev basis when $a =b= \pm \frac{1}{2}$, and  is Lengende basis when $a =b=0$. Furthermore, 
according to~\cite{tao2023longnn}, $a$ and $b$ can also be deﬁned as learnable parameters in  our Learning Filter layer, which could  generate more proper $a$ and $b$ for Jacobi polynomials.

Since the Jacobi polynomials are defined in the interval $ [-1, 1] $, the truncated eigenvalues $\Lambda$ of the LBOs should be scaled to this interval accordingly. Thus, we rewrite the orthonormal Jacobi polynomials~\cite{tao2023longnn}  as following:  
\begin{equation*}\label{eq: otrhnomalize}
\begin{aligned}
& \left\|{J}^{a,b}_l(\widetilde{\lambda})\right\|=\sqrt{\frac{2^{a+b+1} \Gamma(l+a+1) \Gamma(l+b+1)}{(2l+a+b+1)\Gamma(l+a+b+1)l!}} \\
& {J}_l(\widetilde{\Lambda})=\frac{J_l^{a,b}(\widetilde{\Lambda})}{\left\|J_l^{a,b}(\widetilde{\lambda})\right\|}.
\end{aligned}
\end{equation*}
where $ \widetilde{\Lambda} = \frac{2\Lambda}{\lambda_{k}}-\mathrm{I}_{k} $, $ \mathrm{I}_{k} \in \mathbb{R}^{k \times k} $ is a identity matrix, and $\Gamma$ is Gamma function. Then we have the learnable orthonormal Jacobi polynomials for yielding our filters, e.g. 
\begin{equation*}
    h_s(\widetilde{\Lambda}) = \sum_{l} w_{s l} {J}^{a,b}_l(\widetilde{\Lambda}),
\end{equation*}
where$\{w_{s l}\}$ are trainable coefficients. In addition, in order to reduce the overfitting of our networks, 
we use the technique called Polynomial Coefficient Decomposition (PCD) \cite{wang2022powerful} to decompose the polynomial coefficients. 
$w_{sl} $ could be decomposed to  $w_{sl}=\alpha_{sl} \prod \limits_{i=0}^n \beta_{i}$, then we can modify the recursion formula to implement PCD.
\begin{equation*}
    {J}_{l}^{a, b}(\widetilde{\Lambda})=\beta_{l} (\mu_{l} \widetilde{\Lambda}+\mu_{l}^{\prime}) {J}_{l-1}^{a, b}(\widetilde{\Lambda})- \beta_{l}\beta_{l-1} \mu_{l}^{\prime \prime} {J}_{l-2}^{a, b}(\widetilde{\Lambda}), 
\end{equation*}
where $\beta_{i} = \beta^{'} tanh \gamma_{i} $, which enforces $\beta_{i}\in \left[-\beta^{'}, \beta^{'} \right]$.


\end{document}